\newif\ifisarxiv

\documentclass{article}




\ifisarxiv
\usepackage{fullpage}
\usepackage[utf8]{inputenc} 
\usepackage[T1]{fontenc}    
\usepackage{hyperref}       
\usepackage{url}            
\usepackage{array, booktabs, ragged2e}
\usepackage{booktabs}       
\usepackage{amsfonts}       
\usepackage{nicefrac}       
\usepackage{microtype}      
\usepackage{xcolor}
\usepackage{cite}
\usepackage{colortbl}
\usepackage{wrapfig}

\newcommand\ga{ \rowcolor{gray!0}}
\newcommand\gb{ \rowcolor{gray!15}}

\usepackage{microtype}
\usepackage{graphicx}
\usepackage{tabularx}
\usepackage{array}
\usepackage{subcaption}
\usepackage{booktabs} 
\usepackage{comment}
\usepackage{amsfonts}
\usepackage{amsmath}
\usepackage{enumitem}
\usepackage{tabularx}
\usepackage[export]{adjustbox}
\usepackage{amsthm}
\newtheorem{definition}{Definition}

\newtheorem{example}{Example}

\newtheorem{proposition}{Proposition}[section]
\newtheorem{lemma}{Lemma}[section]

\newtheorem{remark}{Remark}

\usepackage{hyperref}
 

%

\title{Boundary thickness and robustness in learning models}
\date{}

\author{%
  Yaoqing Yang, Rajiv Khanna, Yaodong Yu, Amir Gholami, \\Kurt Keutzer, Joseph E. Gonzalez, Kannan Ramchandran, Michael W. Mahoney \\
  University of California, Berkeley\\
  \texttt{  \{yqyang, rajivak, yaodong\_yu, amirgh, keutzer, jegonzal, kannanr,}\\
  \texttt{mahoneymw\}@berkeley.edu } \\
}

\else
\usepackage[nonatbib,final]{neurips_2020}

\title{Boundary thickness and robustness \\ in learning models}
\author{%
  Yaoqing Yang, Rajiv Khanna, Yaodong Yu, Amir Gholami,\\ \bf{Kurt Keutzer, Joseph E. Gonzalez, Kannan Ramchandran, Michael W. Mahoney} \\
  University of California, Berkeley\\
  Berkeley, CA 94720 \\
  \texttt{  \{yqyang, rajivak, yaodong\_yu, amirgh, keutzer, jegonzal, kannanr,}\\
  \texttt{mahoneymw\}@berkeley.edu } \\
}

\fi

\begin{document}
\maketitle

\begin{abstract}

Robustness of machine learning models to various adversarial and non-adversarial corruptions continues to be of interest.
In this paper, we introduce the notion of the boundary thickness of a classifier, and we describe its connection with and usefulness for model robustness. 
Thick decision boundaries lead to improved performance, while thin decision boundaries lead to overfitting (e.g., measured by the robust generalization gap between training and testing) and lower robustness.
We show that a thicker boundary helps improve robustness against adversarial examples (e.g., improving the robust test accuracy of adversarial training) as well as so-called out-of-distribution (OOD) transforms, and we show that many commonly-used regularization and data augmentation procedures can increase boundary thickness.
On the theoretical side, we establish that maximizing boundary thickness during training is akin to the so-called mixup training. 
Using these observations, we show that noise-augmentation on mixup training further increases boundary thickness, thereby combating vulnerability to various forms of adversarial attacks and OOD transforms.
We can also show that the performance improvement in several lines of recent work happens in conjunction with a thicker boundary.
\end{abstract}

\section{Introduction}

Recent work has re-highlighted the importance of various forms of robustness of machine learning models.
For example, it is by now well known that by modifying natural images with barely-visible perturbations, one can get neural networks to misclassify images~\cite{goodfellow2014explaining,nguyen2015deep,moosavi2016deepfool,carlini2017towards}. 
Researchers have come to call these slightly-but-adversarially perturbed images \emph{adversarial examples}. 
As another example, it has become well-known that, even aside from such worst-case adversarial examples, neural networks are also vulnerable to so-called \emph{out-of-distribution (OOD) transforms}~\cite{hendrycks2019benchmarking}, i.e., those which contain common corruptions and perturbations that are frequently encountered in natural images. 
These topics have received interest because they provide visually-compelling examples that expose an inherent lack of stability/robustness in these already hard-to-interpret models~\cite{madry2017towards,zhang2019theoretically,cohen2019certified,hendrycks2019augmix,papernot2016distillation,athalye2018obfuscated,tramer2018ensemble}, but of course similar concerns arise in other less visually-compelling situations. 

In this paper, we study neural network robustness through the lens of what we will call \emph{boundary thickness}, a new and intuitive concept that we introduce. 
Boundary thickness can be considered a generalization of the standard margin, used in max-margin type learning~\cite{elsayed2018large,bartlett2017spectrally,sokolic2017robust}. 
Intuitively speaking, the boundary thickness of a classifier measures the expected distance to travel along line segments between different classes across a decision boundary. 
We show that thick decision boundaries have a regularization effect that improves robustness, while thin decision boundaries lead to overfitting and reduced robustness.
We also illustrate that the performance improvement in several lines of recent work happens in conjunction with a thicker boundary, suggesting the utility of this notion more~generally.

More specifically, for adversarial robustness, we show that five commonly used ways to improve robustness can increase boundary thickness and reduce the robust generalization gap (which is the difference between robust training accuracy and robust test accuracy) during adversarial training. 
We also show that trained networks with thick decision boundaries tend to be more robust against OOD transforms. 
We focus on \emph{mixup training}~\cite{zhang2017mixup}, a recently-described regularization technique that involves training on data that have been augmented with pseudo-data points that are convex combinations of the true data points. 
We show that mixup improves robustness to OOD transforms, while at the same time achieving a thicker decision boundary. 
In fact, the boundary thickness can be understood as a dual concept to the mixup training objective, in the sense that the former is maximized as a result of minimizing the mixup loss. 
In contrast to measures like margin, boundary thickness is easy to measure, and (as we observe through counter examples) boundary thickness can differentiate neural networks of different robust generalization gap, while margin cannot.

For those interested primarily in training, our observations also lead to novel training procedures. 
Specifically, we design and study a novel noise-augmented extension of mixup, referred to as \emph{noisy mixup}, which augments the data through a mixup with random noise, to improve robustness to image imperfections. 
We show that noisy mixup thickens the boundary, and thus it significantly improves robustness, including black/white-box adversarial attacks, as well as OOD transforms. 

In more detail, here is a summary of our main contributions. 

\begin{itemize}
\item 
We introduce the concept of boundary thickness (Section \ref{sec:boundary_thickness_intro}), and we illustrate its connection to various existing concepts, including showing that as a special case it reduces to~margin.
\item 
We demonstrate empirically that a thin decision boundary leads to poor adversarial robustness as well as poor OOD robustness (Section \ref{sec:boundary_regularization}), and we evaluate the effect of model adjustments that affect boundary thickness. 
In particular, we show that five commonly used regularization and data augmentation schemes ($\ell_1$ regularization, $\ell_2$ regularization, large learning rate \cite{li2019towards}, early stopping, and cutout \cite{devries2017improved}) all increase boundary thickness and reduce overfitting of adversarially trained models (measured by the robust accuracy gap between training and testing). 
We also show that boundary thickness outperforms margin as a metric in measuring the robust generalization gap.
\item 
We show that our new insights on boundary thickness make way for the design of new robust training schemes (Section \ref{sec:applications}).
In particular, we designed a noise-augmentation training scheme that we call noisy mixup to increase boundary thickness and improve the robust test accuracy of mixup for  both adversarial examples and OOD transforms. 
We also show that mixup achieves the minimax decision boundary thickness, providing a theoretical justification for both mixup and noisy mixup.
\end{itemize} 

\noindent
Overall, our main conclusion is the following.

\begin{quote}
\emph{%
Boundary thickness is a reliable and easy-to-measure metric that is associated with model robustness, and training a neural network while ensuring a thick boundary can improve robustness in various ways that have received attention recently. 
}
\end{quote}

In order that our results can be reproduced and extended, we have open-sourced our code.\footnote{https://github.com/nsfzyzz/boundary\_thickness}

\paragraph{Related work.}

Both adversarial robustness \cite{goodfellow2014explaining,nguyen2015deep,moosavi2016deepfool,carlini2017towards,madry2017towards,zhang2019theoretically,cohen2019certified,athalye2018obfuscated} and OOD robustness \cite{hendrycks2019augmix,hendrycks2019benchmarking,yin2019fourier,liang2018enhancing,snoek2019can} have been well-studied in the literature. 
From a geometric perspective, one expects robustness of a machine learning model to relate to its decision boundary. 
In \cite{goodfellow2014explaining}, the authors claim that adversarial examples arise from the linear nature of neural networks, hinting at the relationship between decision boundary and robustness. 
In \cite{tanay2016boundary}, the authors provide the different explanation that the decision boundary is not necessarily linear, but it tends to lie close to the ``data sub-manifold.'' 
This explanation is supported by the idea that cross-entropy loss leads to poor margins \cite{nar2018cross}. 
Some other works also study the connection between geometric properties of a decision boundary and the robustness of the model, e.g., on the boundary curvature \cite{moosavi2017universal,fawzi2017robustness}. 
The paper \cite{zhang2020attacks} uses early-stopped PGD to improve natural accuracy, and it discusses how robust loss changes the decision boundary.
Another related line of recent work points out that the inductive bias of neural networks towards ``simple functions'' may have a negative effect on network robustness \cite{nakkiran2019adversarial}, although being useful to explain generalization \cite{de2019random,valle2018deep}.
These papers support our observation that natural training tends to generate simple, thin, but easy-to-attack decision boundaries, while avoiding thicker more robust boundaries.

Mixup is a regularization technique introduced by \cite{zhang2017mixup}. In mixup, each sample is obtained from two samples $x_1$ and $x_2$ using a convex combination $x = t x_1 + (1-t) x_2$, with some random $t \in [0,1]$. The label $y$ is similarly obtained by a convex combination.
Mixup and some other data augmentation schemes have been successfully applied to improve robustness.
For instance, \cite{lamb2019interpolated} uses mixup to interpolate adversarial examples to improve adversarial robustness. 
The authors in \cite{lee2020adversarial} combine mixup and label smoothing on adversarial examples to reduce the variance of feature representation and improve robust generalization. 
\cite{lopes2019improving} uses interpolation between Gaussian noise and Cutout \cite{devries2017improved} to improve OOD robustness.
The authors in \cite{hendrycks2019augmix} mix images augmented with various forms of transformations with a smoothness training objective to improve OOD robustness.
Compared to these prior works, the extended mixup with simple noise-image augmentation studied in our paper is motivated from the perspective of decision boundaries; and it provides a concrete explanation for the performance improvement, as regularization is introduced by a thicker boundary. 
Another recent paper \cite{rice2020overfitting} also shows the importance of reducing overfitting in adversarial training, e.g., using early stopping, which we also demonstrate as one way to increase boundary thickness.

\section{Boundary Thickness}
\label{sec:boundary_thickness_intro}

In this section, we introduce boundary thickness and discuss its connection with related notions. 

\begin{figure}
    \begin{subfigure}{.18\textwidth}
        \includegraphics[width=.99\linewidth]{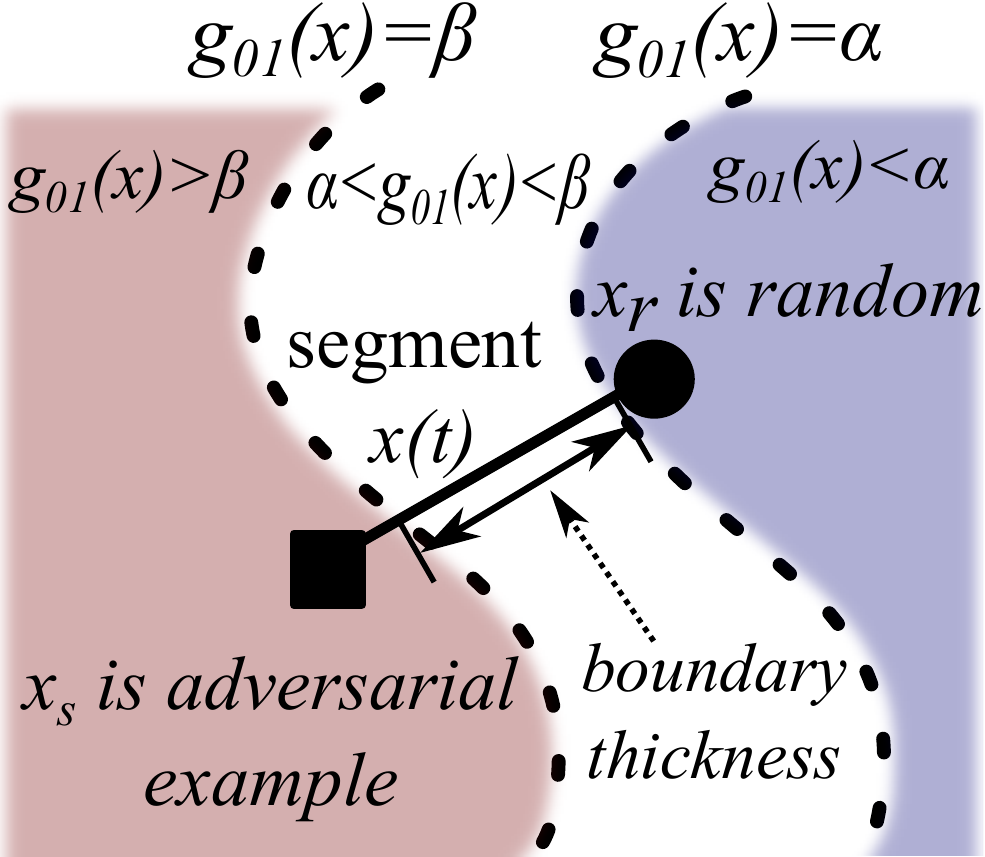}\vspace{-2mm}
        \subcaption{\label{fig:thickness_definition}}
    \end{subfigure}
    \begin{subfigure}{.30\textwidth}
        \vspace{2mm}
        \includegraphics[width=.99\linewidth]{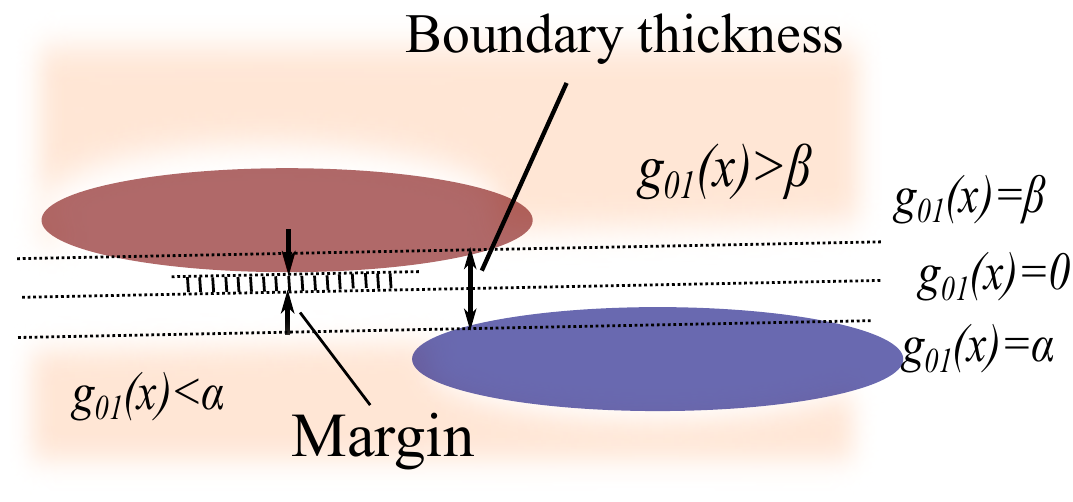}\vspace{-2mm}
        \subcaption{\label{fig:thickness_intuition_1}}
    \end{subfigure}
    \begin{subfigure}{.30\textwidth}
        \vspace{2mm}
        \includegraphics[width=.99\linewidth]{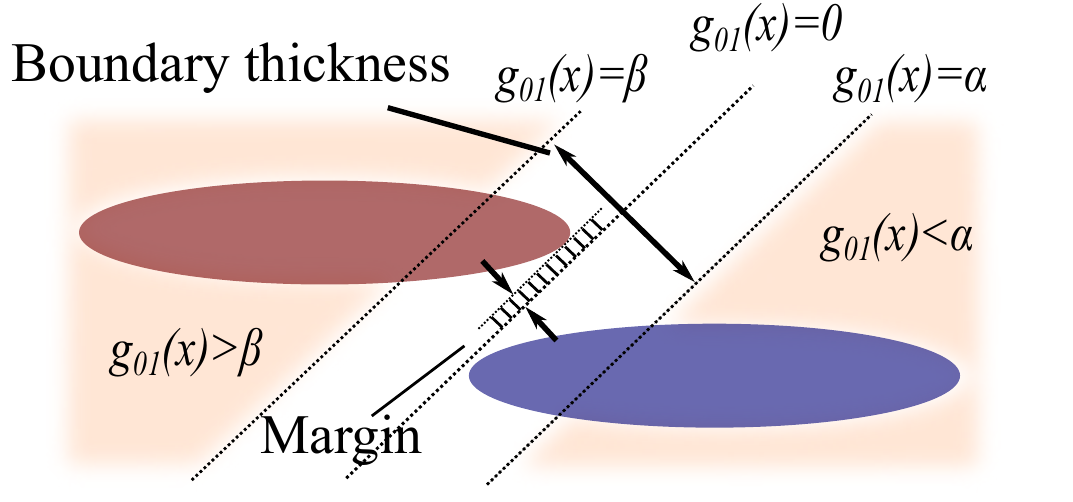}\vspace{-2mm}
        \subcaption{\label{fig:thickness_intuition_2}}
    \end{subfigure}
    \begin{subfigure}{.18\textwidth}
        \includegraphics[width=.99\linewidth]{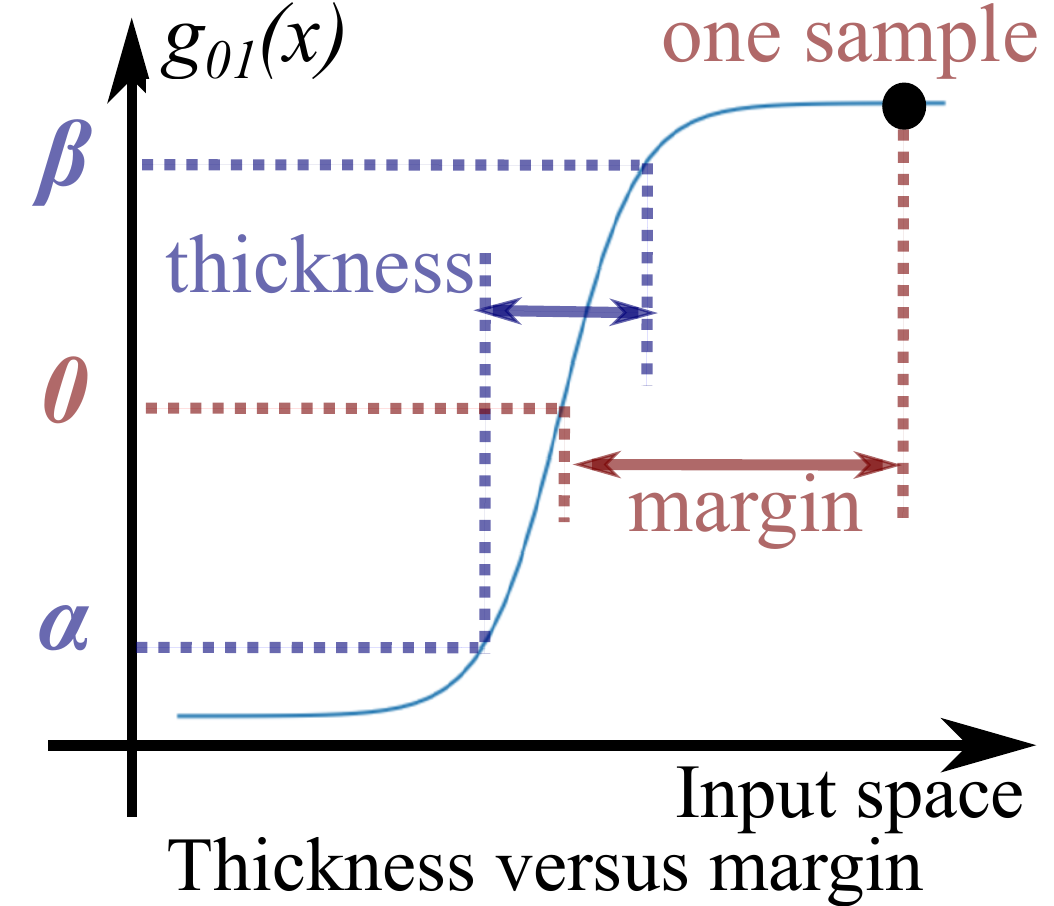}\vspace{-2mm}
        \subcaption{\label{fig:thickness_intuition_3}}
    \end{subfigure}
    \caption{{\bf Main intuition behind boundary thickness.} 
    (a) Boundary thickness measures the gap between two level sets $g_{01}(x)=\alpha$ and $g_{01}(x)=\beta$ along the adversarial direction. 
    (b) A thin boundary easily fits into the narrow space between two different classes, but it is not robust. 
    (c) A thick boundary is harder to achieve a small loss, but it achieves higher robustness. 
    In these two toy settings, the margin is the same, but choosing a thicker boundary leads to better robustness. (d) Illustration of the relationship between boundary thickness and margin.
    }
    \label{fig:max-margin}
\end{figure}

\subsection{Boundary thickness}\label{sec:definition_of_thickness}

Consider a classification problem with $C$ classes on the domain space of data  $\mathcal{X}$. 
Let $f(x): \mathcal{X} \to [0,1]^C$ be the prediction function, so that $f(x)_i$ for class $i \in [C]$ represents the posterior probability $\Pr(y=i|x)$, where $(x,y)$ represents a feature vector and response pair. 
Clearly, $\sum_{i=0}^{C-1} f(x)_i = 1, \forall x \in \mathcal{X}$. For neural networks, the function $f(x)$ is the output of the softmax layer.
In the following definition, we quantify the thickness of a decision boundary by measuring the posterior probability difference $ g_{ij}(x) = f(x)_i-f(x)_j$ on line segments connecting pairs of points $(x_r, x_s)\in \mathcal{X}$ (where $x_r, x_s$ are \emph{not} restricted to the training set). 

\begin{definition}[Boundary Thickness]
\label{def:boundary_thickness}
For $\alpha, \beta \in (-1,1)$ and a distribution $p$ over pairs of points $(x_r, x_s) \sim p$, let the predicted labels of $x_r$ and $x_s$ be $i$ and $j$ respectively. 
Then, the boundary thickness of a prediction function $f(\cdot)$ is
\begin{equation}\label{eqn:def_ij}
    \Theta(f, \alpha, \beta, p) = \mathbb{E}_{(x_r, x_s) \sim p} \left[\|x_r-x_s\| \int_0^1 \mathbf{I} \{\alpha < g_{ij}(x(t)) <\beta\}dt  \right],
\end{equation}
where $ g_{ij}(x) = f(x)_i-f(x)_j$, $\mathbf{I}\{\cdot\}$ is the indicator function, and $x(t) = t x_r + (1-t) x_s, t\in [0,1]$.
\end{definition}

Intuitively, boundary thickness captures the distance between two level sets $g_{ij}(x)=\alpha$ and $g_{ij}(x)=\beta$ by measuring the expected gap on random line segments in $\mathcal{X}$. See Figure \ref{fig:thickness_definition}. Note that in addition to the two constants, $\alpha$ and $\beta$, Definition~\ref{def:boundary_thickness} of boundary thickness requires one to specify a distribution $p$ to choose pairs of points $(x_r, x_s)$. 
We show that specific instances of $p$ recovers margin (Section \ref{sec:dif_margin}) and mixup regularization (Section \ref{sec:mixup_theory}). For the rest of the paper, we set $p$ as follows. Choose $x_r$ uniformly at random from the training set. Denote $i$ its predicted label. Then, choose $x_s$ to be an $\ell_2$ adversarial example generated by attacking $x_r$ to a random target class $j\neq i$. We first look at a simple example on linear classifiers to illustrate the concept.

\begin{example}[Binary Linear Classifier]
\normalfont
Consider a binary linear classifier, with weights $w$ and bias $b$. 
The prediction score vector is $f(x) = [f(x)_0, f(x)_1] = [\sigma(w^\top x + b), 1-\sigma(w^\top x + b)]$, where $\sigma(\cdot): \mathbb{R}\to [0,1]$ is the sigmoid function. 
In this case, measuring thickness in the $\ell_2$ adversarial direction means that we choose $x_r$ and $x_s$ such that $x_s - x_r = cw, c\in \mathbb{R}$. 
In the following proposition, we quantify the boundary thickness for a binary linear classifier.
(See Section \ref{proof:linear} for the~proof.)

\begin{proposition}[Boundary Thickness of Binary Linear Classifier] 
\label{prop:linear}
Let $\tilde{g}(\cdot) := 2\sigma(\cdot) - 1$. 
If $[\alpha,\beta]\subset [g_{01}(x_r), g_{01}(x_s)]$, then the thickness of the binary linear classifier is given by:
\begin{equation}\label{eqn:linear_thickness_final}
    \Theta(f, \alpha, \beta) = (\tilde{g}^{-1}(\beta) - \tilde{g}^{-1}(\alpha))/\|w\|.
\end{equation}
\end{proposition}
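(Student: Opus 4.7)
The plan is to reduce the multivariate integral to a one-dimensional calculation by exploiting the fact that along the adversarial direction $w$, the classifier output depends only on the scalar affine form $w^\top x + b$. First I would rewrite the posterior-probability difference as $g_{01}(x) = f(x)_0 - f(x)_1 = 2\sigma(w^\top x + b) - 1 = \tilde{g}(w^\top x + b)$, so that $g_{01}$ is a strictly increasing function of the scalar $u(x) := w^\top x + b$. This strict monotonicity of $\tilde{g}$ (and invertibility onto $(-1,1)$) is what lets us translate level-set conditions on $g_{01}$ into interval conditions on $u$.

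Next I would parameterize the line segment. Writing $u_r := w^\top x_r + b$ and $u_s := w^\top x_s + b$, and using $x_s - x_r = c w$, we get $u_s - u_r = c\|w\|^2$, while $\|x_s - x_r\| = |c|\,\|w\|$. Along the segment $x(t) = t x_r + (1-t) x_s$, the scalar $u(x(t))$ is an affine function of $t$ whose derivative has absolute value $|u_r - u_s| = |c|\,\|w\|^2$. Thus $g_{01}(x(t)) = \tilde{g}(u(x(t)))$ is monotone in $t$.

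Then I would evaluate the indicator integral. The assumption $[\alpha,\beta] \subset [g_{01}(x_r), g_{01}(x_s)]$ guarantees (by monotonicity) that the full level set $\{t \in [0,1] : \alpha < g_{01}(x(t)) < \beta\}$ is a subinterval of $[0,1]$ on which $\tilde{g}^{-1}(\alpha) < u(x(t)) < \tilde{g}^{-1}(\beta)$. A change of variables from $t$ to $u$ gives
\begin{equation*}
\int_0^1 \mathbf{I}\{\alpha < g_{01}(x(t)) < \beta\}\, dt = \frac{\tilde{g}^{-1}(\beta) - \tilde{g}^{-1}(\alpha)}{|c|\,\|w\|^2}.
\end{equation*}
Multiplying by $\|x_r - x_s\| = |c|\,\|w\|$ cancels the $|c|$ and one power of $\|w\|$, yielding the claimed expression $(\tilde{g}^{-1}(\beta) - \tilde{g}^{-1}(\alpha))/\|w\|$.

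The main obstacle is essentially bookkeeping: handling the sign of $c$ (equivalently, whether $u_r < u_s$ or $u_r > u_s$) so that the monotone change of variables is applied to the correct oriented interval, and verifying that the containment hypothesis $[\alpha,\beta] \subset [g_{01}(x_r), g_{01}(x_s)]$ really does force the entire $\{\alpha < g_{01} < \beta\}$ set to lie inside $[0,1]$ (so no boundary effects at $t=0$ or $t=1$ enter). Once this is handled, the computation is a one-line substitution, and the independence of the final answer from $c$ is a natural sanity check that the adversarial-direction assumption has been used consistently.
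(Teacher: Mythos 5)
Your proposal is correct and follows essentially the same route as the paper's proof: both reduce to the scalar variable $u = w^\top x + b$, use monotonicity of $\tilde g$ to turn the level-set condition into an interval condition on $u$, perform the affine change of variables along the segment, and invoke the containment hypothesis to drop the boundary terms. Your handling of the sign of $c$ via absolute values is, if anything, slightly more careful than the paper's, which implicitly orients the integral by swapping its limits.
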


Note that $x_s$ and $x_r$ should be chosen such that the condition $[\alpha,\beta]\subset[g_{01}(x_r), g_{01}(x_s)]$ in Proposition \ref{prop:linear} is satisfied. 
Otherwise, for linear classifiers, the segment between $x_r$ and $x_s$ is not long enough to span the gap between the two level sets $g_{01}(x) = \alpha$ and $g_{01}(x) = \beta$ and cannot simultaneously intersect the two level~sets.

\end{example}

For a pictorial illustration of why boundary thickness should be related to robustness and why a thicker boundary should be desirable, see Figures~\ref{fig:thickness_intuition_1} and~\ref{fig:thickness_intuition_2}. 
The three curves in each figure represent the three level sets $g_{01}(x)=\alpha$, $g_{01}(x)=0$, and $g_{01}(x)=\beta$. 
The thinner boundary in Figure~\ref{fig:thickness_intuition_1} easily fits the narrow space between two different classes, but it is easier to attack. 
The thicker boundary in Figure~\ref{fig:thickness_intuition_2}, however, is harder to fit data with a small loss, but it is also more robust and harder to attack. 
To further justify the intuition, we provide an additional example in Section \ref{sec:chess_board}.
Note that the intuition discussed here is reminiscent of max margin optimization, but it is in fact more general.
In Section,~\ref{sec:dif_margin}, we highlight differences between the two concepts (and later, in Section~\ref{sec:thickness_beats_margin}, we also show that margin is not a particularly good indicator of robust~performance).

\subsection{Boundary thickness generalizes margin}\label{sec:dif_margin}

We first show that boundary thickness reduces to margin in the special case of binary linear SVM. We then extend this result to general classifiers.

\begin{example}[Support Vector Machines]
\label{ex:svm}
\normalfont
As an application of Proposition \ref{prop:linear}, we can compute the boundary thickness of a binary SVM, which we show is equal to the margin. 
Suppose we choose $\alpha$ and $\beta$ to be the values of $\tilde{g}(u)$ evaluated at two support vectors, i.e., at $u = w^\top x +b = -1$ and $u = w^\top x +b = 1$. Then, $\tilde{g}^{-1}(\alpha) = -1$ and $\tilde{g}^{-1}(\beta) = 1$. 
Thus, from \eqref{eqn:linear_thickness_final}, we obtain $(\tilde{g}^{-1}(\beta) - \tilde{g}^{-1}(\alpha))/\|w\| = 2/\|w\|$, which is the (input-space) margin of an SVM. 
\end{example}

We can also show that the reduction to margin applies to more general classifiers. Let $S(i,j) = \{x\in \mathcal{X}: f(x)_i = f(x)_j\}$ denote the decision boundary between classes $i$ and $j$. 
The (input-space) margin~\cite{elsayed2018large} of $f$ on a dataset $\mathcal{D} = \{x_k\}_{k=0}^{n-1}$ is defined as
\begin{equation}\label{eqn:margin}
    \textit{Margin}(\mathcal{D}, f) = \min_k\min_{j\neq y_k}\| x_k - \text{Proj}(x_k, j) \|,
\end{equation}
where $\text{Proj}(x, j) = \arg\min_{x' \in S(i_x,j)} \| x' - x\|$ is the projection onto the decision boundary $S(i_x,j)$. 
See Figure~\ref{fig:thickness_intuition_1} and~\ref{fig:thickness_intuition_2}.

Boundary thickness for the case when $\alpha=0$, $\beta = 1$, and when $p$ is so chosen that $x_s$ is the projection $\text{Proj}(x_r, j)$ for the worst case class $j$, reduces to margin. 
See Figure \ref{fig:thickness_intuition_3} for an illustration of this relationship for a two-class problem.
Note that the left hand side of \eqref{eqn:thickness_reduces_to_margin} is a ``worst-case'' version of the boundary thickness in \eqref{eqn:def_ij}. 
This can be formalized in the following proposition.
(See Section \ref{proof:margin} for the proof.)
\begin{proposition}[Margin is a Special Case of Boundary Thickness]
\label{prop:margin}
Choose $x_r$ as an arbitrary point in the dataset $\mathcal{D} = \{x_k\}_{k=0}^{n-1}$, with predicted label $i = \arg\max_l f(x)_l$. For another class $j\neq i$, choose $x_s = \text{Proj}(x_r, j)$. Then,
\begin{equation}\label{eqn:thickness_reduces_to_margin}
\min_{x_r}\min_{j\neq i}\|x_r-x_s\| \int_0^1 \mathbf{I} \{\alpha < g_{ij}(x(t)) <\beta\}dt =  \text{Margin}(\mathcal{D}, f).
\end{equation}
\end{proposition}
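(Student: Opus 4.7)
The plan is to reduce the integral in the left-hand side of \eqref{eqn:thickness_reduces_to_margin} to $1$ for every admissible pair $(x_r, x_s)$, so that the quantity inside the double minimum collapses to $\|x_r - \text{Proj}(x_r, j)\|$, after which taking the outer minimum directly recovers the margin formula in \eqref{eqn:margin}. I take the parameters implied by the surrounding text, namely $\alpha=0$ and $\beta=1$, fix an arbitrary $x_r \in \mathcal{D}$ with predicted label $i$, and fix an arbitrary $j \neq i$ with $x_s = \text{Proj}(x_r, j)$.

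The main step is to verify that $g_{ij}(x(t)) \in (0,1)$ for almost every $t \in [0,1]$, so that the indicator in \eqref{eqn:thickness_reduces_to_margin} equals $1$ outside a measure-zero set. For the lower bound, note that $g_{ij}(x(1)) = g_{ij}(x_r) > 0$ because $i = \arg\max_l f(x_r)_l$, while $g_{ij}(x(0)) = g_{ij}(x_s) = 0$ because $x_s$ lies on $S(i,j)$. Suppose for contradiction that some $t_0 \in (0,1]$ gave $g_{ij}(x(t_0)) \le 0$. Continuity of $g_{ij}$ along the segment combined with $g_{ij}(x_r) > 0$ and the intermediate value theorem would produce a $t^* \in (0,1)$ with $g_{ij}(x(t^*)) = 0$, i.e., $x(t^*) \in S(i,j)$. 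But then
\[
\| x_r - x(t^*) \| = (1 - t^*) \, \| x_r - x_s \| < \| x_r - x_s \|,
\]
contradicting that $x_s$ is the closest point on $S(i,j)$ to $x_r$. For the upper bound, since $f$ is a softmax output we have $f(x)_j > 0$ strictly, so $g_{ij}(x) < 1$ everywhere. Thus the indicator is $1$ on $(0,1]$ except on a null set, and the integral equals $1$.

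Substituting into the left-hand side of \eqref{eqn:thickness_reduces_to_margin} yields $\min_{x_r} \min_{j \neq i} \|x_r - \text{Proj}(x_r, j)\|$, which coincides with $\text{Margin}(\mathcal{D}, f)$ after identifying the predicted label $i$ with the ground-truth label $y_k$ on the training points that enter the outer minimum (or, equivalently, reading the margin as defined with respect to $i_{x_k}$ as in the discussion after \eqref{eqn:margin}). The only nontrivial obstacle is the projection-based geometric argument used to rule out an earlier boundary crossing; everything else is a direct rewriting. If one wishes to avoid the softmax strict-positivity assumption, one can instead take $\beta = 1 + \epsilon$ or note that the set $\{t: g_{ij}(x(t)) = 1\}$ is closed and has measure zero under mild regularity of $f$, which does not affect the integral.
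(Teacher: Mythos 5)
Your proof is correct and follows essentially the same route as the paper's: fix $\alpha=0$, $\beta=1$, use the minimality of the projection to rule out any other zero of $g_{ij}$ on the segment (hence $g_{ij}>0$ there), use the softmax to get $g_{ij}<1$, and conclude the integral equals $1$ so the double minimum collapses to the margin. Your version is in fact slightly more careful than the paper's, making the intermediate-value-theorem step and the measure-zero endpoint explicit.
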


\begin{remark}[Margin versus Thickness as a Metric]
\normalfont
It is often impractical to compute the margin for general nonlinear functions. 
On the other hand, as we illustrate below, measuring boundary thickness is straightforward. 
As noted by \cite{zhang2017mixup}, using mixup tends to make a decision boundary more ``linear,'' which helps to reduce unnecessary oscillations in the boundary. As we show in Section \ref{sec:noisy_mixup}, mixup effectively makes the boundary thicker. This effect is not directly achievable by increasing margin.
\end{remark}

\subsection{A thick boundary mitigates boundary tilting}\label{sec:boundary_tilting_theory}

Boundary tilting was introduced by \cite{tanay2016boundary} to capture the idea that for many neural networks the decision boundary ``tilts'' away from the max-margin solution and instead leans towards a ``data sub-manifold,'' which then makes the model less robust. 
Define the cosine similarity between two vectors $a$ and $b$ as: 
\begin{equation}
    \text{Cosine Similarity}(a, b) = |a^\top b|/(\|a\|_2 \cdot \|b\|_2).
\end{equation}

For a dataset $D = \{(x_i, y_i)\}_{i\in I}$, in which the two classes $\{x_i\in D: y_i=1\}$ and $\{x_i\in D: y_i=-1\}$ are linearly separable, boundary tilting can be defined as the worse-case cosine similarity between a classifier $w$ and the hard-SVM solution $w^* := \arg\min \|v\|_2 \text{ s.t. }y_i v^\top x_i\ge 1, \forall i \in I$:
\begin{equation}\label{eqn:worst_tilting}
    T(u) := \min_{v\text{ s.t. }\|v\|_2 = u \text{ and } y_i v^\top x_i\ge 1, \forall i } \text{Cosine Similarity}(v, w^*),
\end{equation}
which is a function of the $u$ in the $\ell_2$ constraint $\|v\|_2 = u$. 
In the following proposition, we formalize the idea that boundary thickness tends to mitigate tilting. 
(See Section \ref{proof:tilting} for the proof; and see also Figure \ref{fig:thickness_intuition_1}.)

\begin{proposition}[A Thick Boundary Mitigates Boundary Tilting]\label{prop:tilting}
The worst-case boundary tilting $T(u)$ is a non-increasing function of $u$.
\end{proposition}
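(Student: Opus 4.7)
The plan is to exploit a scaling argument: increasing $u$ can only enlarge the set of directions attainable by feasible classifiers, so the minimum cosine similarity to $w^*$ can only shrink. Concretely, I would take arbitrary $u_1 \le u_2$ in the range where the feasible sets are non-empty (i.e., $u_1 \ge \|w^*\|_2$) and exhibit, for every feasible $v_1$ at radius $u_1$, a feasible $v_2$ at radius $u_2$ with the same direction.

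The key steps, in order, are as follows. First, let $v_1^\ast$ be a minimizer of $\text{Cosine Similarity}(v, w^*)$ over $\{v : \|v\|_2 = u_1,\ y_i v^\top x_i \ge 1\ \forall i\}$, so that $T(u_1) = \text{Cosine Similarity}(v_1^\ast, w^*)$. Second, define the rescaling $v_2 := (u_2/u_1)\, v_1^\ast$. Check feasibility at radius $u_2$: clearly $\|v_2\|_2 = u_2$, and $y_i v_2^\top x_i = (u_2/u_1) y_i (v_1^\ast)^\top x_i \ge u_2/u_1 \ge 1$, using that the margin inequality is preserved under positive scaling and that $u_2 \ge u_1$. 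Third, observe that cosine similarity is invariant under positive scalar multiplication, so $\text{Cosine Similarity}(v_2, w^\ast) = \text{Cosine Similarity}(v_1^\ast, w^\ast) = T(u_1)$. Fourth, since $v_2$ is feasible for the minimization defining $T(u_2)$, we get $T(u_2) \le \text{Cosine Similarity}(v_2, w^\ast) = T(u_1)$, which proves monotonicity.

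I expect no real obstacle: the scaling preserves both the hyperplane orientation (hence the cosine similarity) and the sign/magnitude structure of the margin constraint, so feasibility is automatic. The only mild care needed is handling the domain: the feasible set is empty for $u < \|w^\ast\|_2$ (since $w^\ast$ is the min-norm feasible vector), so the statement is understood on the interval $u \ge \|w^\ast\|_2$ where $T$ is well-defined. The geometric content of the proposition is exactly that shrinking the norm constraint tightens the feasible cone of admissible directions toward $w^\ast$, which matches the intuition in Figure~\ref{fig:thickness_intuition_1} that a thicker boundary (attained by classifiers with smaller norm) forces alignment with the max-margin direction.
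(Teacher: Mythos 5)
Your proof is correct and follows essentially the same route as the paper's: take the minimizer at the smaller radius, scale it up to the larger radius, check that the margin constraints are preserved under positive scaling, and use the scale-invariance of cosine similarity to conclude that the minimum at the larger radius can only be smaller. Your additional remark about the domain $u \ge \|w^*\|_2$ where the feasible set is nonempty is a sensible clarification that the paper leaves implicit.
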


A smaller cosine similarity between the $w$ and the SVM solution $w^*$ corresponds to more tilting. 
In other words, $T(u)$ achieves the maximum when there is no tilting.
From Proposition \ref{prop:linear}, for a linear classifier, we know that $u=\|w\|_2$ is inversely proportional to thickness. 
Thus, $T(u)$ is a non-decreasing function in thickness. 
That is, a thicker boundary leads to a larger $T(u)$, which means the worst-case boundary tilting is mitigated. 
We demonstrate Proposition~\ref{prop:tilting} on the more general nonlinear classifiers in Section~\ref{sec:thick_boundary_reduces_tilting}.

\section{Boundary Thickness and Robustness}\label{sec:boundary_regularization}

In this section, we measure the change in boundary thickness by slightly altering the training algorithm in various ways, and we illustrate the corresponding change in robust accuracy. 
We show that across many different training schemes, boundary thickness corresponds strongly with model robustness. 
We observe this correspondence for both non-adversarial as well as adversarial training. 
We also present a use case illustrating why using boundary thickness rather than margin as a metric for robustness is useful. 
More specifically, we show that a thicker boundary reduces overfitting in adversarial training, while margin is unable to differentiate different levels of overfitting. 

\subsection{Non-adversarial training}\label{sec:regularization}

\begin{figure}
    \centering
    \begin{subfigure}{0.98\textwidth}
    \includegraphics[width=.98\textwidth]{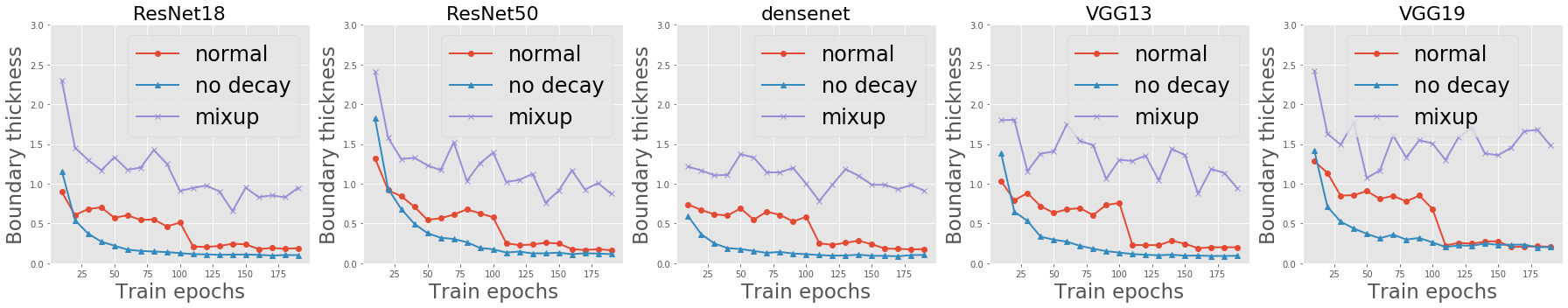}
    \subcaption{Thickness: mixup $>$ normal training $>$ training without weight decay. After learning rate decays (at both epoch 100 and 150), decision boundaries get thinner. 
    }
    \label{fig:thickness_adv_direction}
    \end{subfigure}
    \begin{subfigure}{0.98\textwidth}
    \centering
    \includegraphics[width=.98\textwidth]{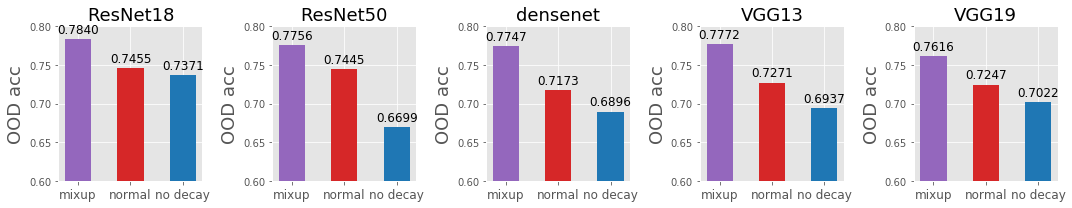}
    \subcaption{OOD robustness: mixup $>$ normal training $>$ training without weight decay. Compare with Figure \ref{fig:thickness_adv_direction} to see that mixup increases thickness, while training without weight decay reduces thickness.}
    \label{fig:OOD_measurement}
    \end{subfigure}
    \caption{{\bf OOD robustness and thickness.} OOD robustness improves with increasing boundary thickness for a variety of neural networks trained on CIFAR10. ``Normal'' means training with standard weight decay 5e-4, and ``no decay'' means training without weight decay. Mixup uses the recommended weight decay 1e-4.\label{fig:OOD_results}
    }
    
\end{figure}

Here, we compare the boundary thicknesses and robustness of models trained with three different schemes on CIFAR10 \cite{cifar10}, namely training without weight decay, training with standard weight decay, and mixup training \cite{zhang2017mixup}. 
Note that these three training schemes impose increasingly stronger regularization.
We use different neural networks, including ResNets \cite{resnet}, VGGs \cite{vgg}, and DenseNet \cite{densenet}.\footnote{The models in Figure \ref{fig:OOD_results} are from https://github.com/kuangliu/pytorch-cifar/blob/master/models/resnet.py. }  
All models are trained with the same initial learning rate of 0.1. At both epoch 100 and 150, we reduce the current learning rate by a factor of 10.
The thickness of the decision boundary is measured as described in Section~\ref{sec:boundary_thickness_intro} with $\alpha=0$ and $\beta=0.75$. When measuring thickness on the adversarial direction, we use an $\ell_2$ PGD-20 attack with size 1.0 and step size 0.2. We report the average thickness obtained by repeated runs on 320 random samples, i.e., 320 random samples with their adversarial examples.
The results are shown in Figure~\ref{fig:thickness_adv_direction}.

From Figure \ref{fig:thickness_adv_direction}, we see that the thickness of mixup is larger than that of training with standard weight decay, which is in turn larger than that of training without weight decay. From the thickness drop at epochs 100 and 150, 
we conclude that learning rate decay reduces the boundary thickness. 
Then, we compare the OOD robustness for the three training procedures on the same set of trained networks from the last epoch. 
For OOD transforms, we follow the setup in \cite{hendrycks2019using}, and we evaluate the trained neural networks on CIFAR10-C, which contains 15 different types of corruptions, including noise, blur, weather, and digital corruption. 
From Figure \ref{fig:OOD_measurement}, we see that the OOD robustness corresponds to boundary thickness across different training schemes for all the tested networks. 

See Section \ref{sec:details_measure_thickness} for more details on the experiment. 
See Section \ref{sec:compare_different_thickness} for a discussion of why the adversarial direction is preferred in measuring thickness. 
See Section \ref{sec:ablation_non_adv} for a thorough ablation study of the hyper-parameters, such as $\alpha$ and $\beta$, and on the results of two other datasets, namely CIFAR100 and SVHN \cite{SVHN}. 
See Section \ref{sec:visualization_mixup_boundary} for a visualization of the decision boundaries of normal versus mixup training, which shows that mixup indeed achieves a thicker boundary.

\subsection{Adversarial training}
\label{sec:adv_training_thickness}

Here, we compare the boundary thickness of adversarially trained neural networks in different training settings. More specifically, we study the effect of five regularization and data augmentation schemes, including large initial learning rate, $\ell_2$ regularization (weight decay), $\ell_1$ regularization, early stopping, and cutout. We choose a variety of hyper-parameters and plot the robust test accuracy versus thickness. We only choose hyper-parameters such that the natural training accuracy is larger than 90\%. We also plot the robust generalization gap versus thickness. See Figure \ref{fig:adv_thickness_all_1} and Figure \ref{fig:adv_thickness_all_2}. 
We again observe a similar correspondence---the robust generalization gap reduces with increasing thickness. 
\begin{figure}
    \centering
     \begin{subfigure}{0.325\textwidth}
        \includegraphics[width=1.0\linewidth]{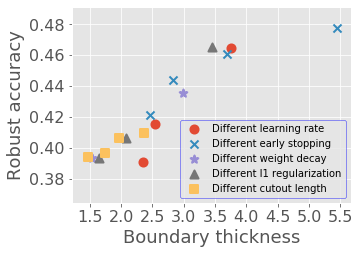}\vspace{-2mm}
        \subcaption{\label{fig:adv_thickness_all_1}}
    \end{subfigure}
    \begin{subfigure}{0.325\textwidth}
        \includegraphics[width=1.0\linewidth]{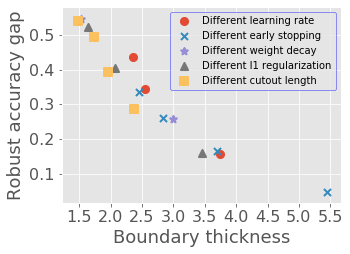}\vspace{-2mm}
        \subcaption{\label{fig:adv_thickness_all_2}}
    \end{subfigure}
    \begin{subfigure}{0.33\textwidth}
        \includegraphics[width=1.0\textwidth]{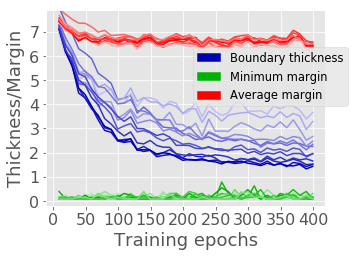}\vspace{-2mm}
        \subcaption{\label{fig:adv_thickness_all_3}}
    \end{subfigure}
    \caption{{\bf Adversarial robustness and thickness.} (a) Increasing boundary thickness improves robust accuracy in adversarial training. (b) Increasing boundary thickness reduces overfitting (measured by robust accuracy gap between training and testing). (c) Thickness can differentiate models of different robust levels (dark to light blue), while margin cannot (dark to light red and dark to light green). Results are obtained for ResNet-18 trained on CIFAR10.}
    \label{fig:adv_thickness_all}
\end{figure}

{\bf Experimental details.} 
In our experiments, we train a ResNet18 on CIFAR10. 
In each set of experiments, we only change one parameter.
In the standard setting, we follow convention and train with learning rate 0.1, weight decay 5e-4, attack range $\epsilon = 8$ pixels, 10 iterations for each attack, and 2 pixels for the step-size. Then, for each set of experiments, we change one parameter based on the standard setting. For $\ell_2$, $\ell_1$ and cutout, we only use one of them at a time to separate their effects. See Section \ref{sec:details_adv_experiment} for the details of these hyper-parameters. Specifically, see Figure \ref{fig:thickness_increases_regularization} which shows that all the five regularization and augmentation schemes increase boundary thickness. We train each model for enough time (400 epochs) to let both the accuracy curves and the boundary thickness stabilize, and to filter out the effect of early stopping. In Section \ref{sec:adv_decay}, we reimplement the whole procedure with the same early stopping at 120 epochs and learning rate decay at epoch 100.
We show that the positive correspondence between robustness and boundary thickness remains the same (see Figure \ref{fig:adv_thickness_all_decay}). 
In Section \ref{sec:ablation_adv_networks}, we provide an ablation study on the hyper-parameters in measuring thickness and again show the same correspondence for the other settings (see Figure \ref{fig:adv_exp_ablation}). 
In Section \ref{sec:margin_vs_thickness_more}, we provide additional analysis on the comparison between boundary thickness and margin.

\subsection{Boundary thickness versus margin}
\label{sec:thickness_beats_margin}

Here, we compare margin versus boundary thickness at differentiating robustness levels. 
See Figure \ref{fig:adv_thickness_all_3}, where we sort the different models shown in Figure \ref{fig:adv_thickness_all_1} by robustness, and we plot their thickness measurements using gradually darker colors. Curves with a darker color represent less robust models.
We see that while boundary thickness correlates well with robustness and hence can differentiate different robustness levels, margin is not able to do this.

From \eqref{eqn:margin}, we see that computing the margin requires computing the projection $\text{Proj}(x_r,j)$, which is intractable for general nonlinear functions. 
Thus, we approximate the margin on the direction of an adversarial attack (which is the projection direction for linear classifiers). 
Another important point here is that we compute the average margin for all samples in addition to the minimum (worst-case) margin in Definition \ref{eqn:margin}.
The minimum margin is almost zero in all cases due to the existence of certain samples that are extremely close to the boundary. 
That is, the standard (widely used) definition of margin performs even worse.

\section{Applications of Boundary Thickness}\label{sec:applications}

While training is not our main focus, our insights motivate new training schemes. At the same time, our insights also aid in explaining the robustness phenomena discovered in some contemporary works, when viewed through the connection between thickness and robustness.  

\subsection{Noisy mixup}\label{sec:noisy_mixup}

\begin{table}
\small
\centering
\begin{tabular}{p{1.5cm}p{1.8cm}p{1.4cm}p{1.1cm}p{1.5cm}p{1.1cm}p{1.1cm}p{1.1cm}}
\hline
\hline
\ga Dataset & \multicolumn{1}{c}{Method} & \multicolumn{1}{c}{Clean} & \multicolumn{1}{c}{OOD} & Black-box  & \multicolumn{3}{c}{PGD-20} \\
\hline 
\gb &&&&& 8-pixel & 6-pixel & 4-pixel\\
\hline
\ga CIFAR10 &  \multicolumn{1}{c}{Mixup} & {\bf96.0}$\pm$0.1 &  78.5$\pm$0.4   &  46.3$\pm$1.4 & 2.0$\pm$0.1        &  3.2$\pm$0.1  &  6.3$\pm$0.1 \\
\gb  & \multicolumn{1}{c}{Noisy mixup} & 94.4$\pm$0.2 &  {\bf 83.6}$\pm$0.3 &  {\bf 78.0}$\pm$1.0 & {\bf 11.7}$\pm$3.3  & {\bf 16.2}$\pm$4.2  & {\bf 25.7}$\pm$5.0 \\
 \hline
\ga CIFAR100 &  \multicolumn{1}{c}{Mixup} & {\bf78.3}$\pm$0.8 &  51.3$\pm$0.4  & 37.3$\pm$1.1 &  0.0$\pm$0.0       &  0.0$\pm$0.0 &  0.1$\pm$0.0\\
\gb  & \multicolumn{1}{c}{Noisy mixup} & 72.2$\pm$0.3 &  {\bf 52.5}$\pm$0.7 &  {\bf 60.1}$\pm$0.3 & {\bf 1.5}$\pm$0.2  & {\bf 2.6$\pm$0.1} & {\bf 6.7}$\pm$0.9 \\
 \hline
 \hline
\end{tabular}
\caption{{\bf Mixup and noisy mixup.} The robust test accuracy of noisy mixup is significantly higher than ordinary mixup. Results are reported for ResNet-18 and for the best learning rate in [0.1, 0.03, 0.01]. 
\label{tab:noisy_mixup}}
\end{table}

Motivated by the success of mixup~\cite{zhang2017mixup} and our insights into boundary thickness, we introduce and evaluate a training scheme that we call \emph{noisy-mixup}.

{\bf Theoretical justification.}
Before presenting noisy mixup, we strengthen the connection between mixup and boundary thickness by stating that the model which minimizes the mixup loss also achieves optimal boundary thickness in a minimax sense. 
Specifically, we can prove the following:
For a fixed arbitrary integer $c>1$, the model obtained by mixup training achieves the minimax boundary thickness, i.e., $f_\text{mixup}(x) = \arg\max_{f(x)} \min_{(\alpha, \beta)} \Theta(f)$, where the minimum is taken over all possible pairs of $(\alpha,\beta)\in (-1,1)$ such that $\beta-\alpha = 1/c$, and the max is taken over all prediction functions $f$ such that $\sum_i f(x)_i=1$. 
See Section~\ref{sec:mixup_theory} for the formal theorem statement and proof.

Ordinary mixup thickens decision boundary by mixing different training samples. 
The idea of noisy mixup, on the other hand, is to thicken the decision boundary between clean samples and arbitrary transformations.
This increases the robust performance on OOD images, for example on images that have been transformed using a noise filter or a rotation. 
Interestingly, pure noise turns out to be good enough to represent such arbitrary transformations. 
So, while the ordinary mixup training obtains one mixup sample $x$ by linearly combining two data samples $x_1$ and $x_2$, in noisy-mixup, one of the combinations of $x_1$ and $x_2$, with some probability $p$, is replaced by an image that consists of random noise. 
The label of the noisy image is ``NONE.'' Specifically, in the CIFAR10 dataset, we let the ``NONE'' class be the $11^\text{th}$ class. Note that this method is different than common noise augmentation because we define a new class of pure noise, and we mix it with ordinary samples.

The comparison between the noisy mixup and ordinary mixup training is shown in Table \ref{tab:noisy_mixup}. 
For OOD accuracy, we follow \cite{hendrycks2019augmix} and use both CIFAR-10C and CIFAR-100C. 
For PGD attack, we use an $\ell_\infty$ attack with 20 steps and with step size being 1/10 of the attack range. We report the results of three different attack ranges, namely 8-pixel, 6-pixel, and 4-pixel.
For black-box attack, we use ResNet-110 to generate the transfer attack. 
The other parameters are the same with the 8-pixel white-box attack. 
For each method and dataset, we run the training procedures with three learning rates (0.01, 0.03, 0.1), each for three times, and we report the mean and standard deviation of the best performing learning rate. 
See Section \ref{sec:details_noisy_mixup} for more details of the~experiment.

\begin{wrapfigure}{r}{0.3\textwidth}
\vspace{-10mm}

  \begin{center}
    \includegraphics[width=0.27\textwidth]{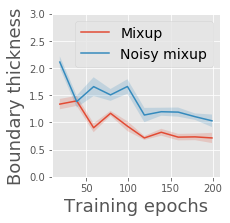}
  \end{center}
  \caption{Noisy miuxp thickens the decision boundary.}
  \label{fig:noisy_mixup}
  
\end{wrapfigure}
From Table \ref{tab:noisy_mixup}, we see that noisy mixup significantly improves the robust accuracy of different types of corruptions. 
Although noisy mixup slightly reduces clean accuracy, the drop of clean accuracy is expected for robust models. For example, we tried adversarial training in the same setting and achieved 57.6\% clean accuracy on CIFAR100, which is about 20\% drop.
In Figure \ref{fig:noisy_mixup}, we show that noisy mixup indeed achieves a thicker boundary than ordinary mixup. We use pure noise to represent OOD, but this simple choice already shows a significant improvement in both OOD and adversarial robustness. This opens the door to devising new mechanisms with the goal of increasing boundary thickness to increase robustness against other forms of image imperfections and/or attacks. In Section \ref{sec:more_experiments_noisy_mixup}, we provide further analysis on noisy mixup.

\subsection{Explaining robustness phenomena using boundary thickness}\label{sec:connect_other_works}

{\bf Robustness to image saturation.} 
We study the connection between boundary thickness and the saturation-based perturbation~\cite{zhang2019interpreting}. 
In \cite{zhang2019interpreting}, the authors show that adversarial training can bias the neural network towards ``shape-oriented'' features and reduce the reliance on ``texture-based'' features. 
One result in \cite{zhang2019interpreting} shows that adversarial training outperforms normal training when the saturation on the images is high. 
In Figure~\ref{fig:saturation}, we show that boundary thickness measured on saturated images in adversarial training is indeed higher than that in normal training.\footnote{We use the online implementation in https://github.com/PKUAI26/AT-CNN.}

\begin{figure}
    \centering
    \begin{subfigure}{0.32\textwidth}
    \includegraphics[width = 1.0\linewidth]{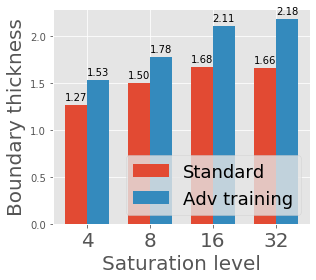}\vspace{-2mm}\subcaption{\label{fig:saturation}}
    \end{subfigure}
    \begin{subfigure}{0.32\textwidth}
    \includegraphics[width = 1.0\linewidth]{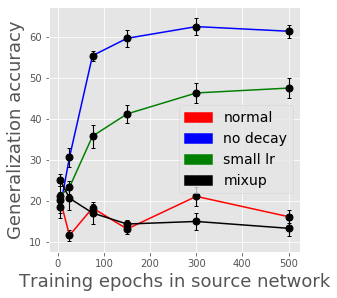}\vspace{-2mm}\subcaption{\label{fig:regularize}}
    \end{subfigure}
    \begin{subfigure}{0.32\textwidth}
    \includegraphics[width = 1.0\linewidth]{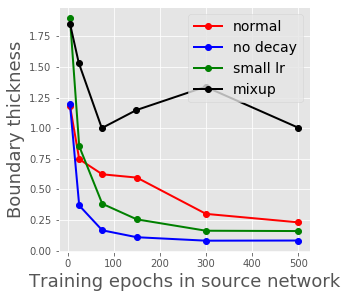}\vspace{-2mm}\subcaption{\label{fig:regularize2}}
    \end{subfigure}
    \caption{
    {\bf Explaining robustness phenomena.}  
    (a) The robustness improvement of adversarial training against saturation-based perturbation (studied in \cite{zhang2019interpreting}) can be explained by a thicker boundary. (b)-(c) 
    Re-implementing the non-robust feature experiment protocol with different training schemes. The two figures show that a thick boundary reduces non-robust features.}
    \label{fig:connect_other_works}
\end{figure}

{\bf A thick boundary reduces non-robust features.} 
We illustrate the connection to \emph{non-robust features}, proposed by~\cite{ilyas2019adversarial} to explain the existence of adversarial examples.
The authors show, perhaps surprisingly, that a neural network trained on data that is completely mislabeled through adversarial attacks can achieve nontrivial generalization accuracy on the clean test data (see Section \ref{sec:mixup_compare} for the experimental protocols of \cite{ilyas2019adversarial} and their specific way of defining generalization accuracy which we use.) 
They attribute this behavior to the existence of non-robust features which are essential for generalization but at the same time are responsible for adversarial vulnerability.

We show that the generalization accuracy defined in this sense decreases if the classifier used to generate adversarial examples has a thicker decision boundary. 
In other words, a thicker boundary removes more non-robust features.
We consider four settings in CIFAR10: 
(1) training without weight decay; 
(2) training with the standard weight decay 5e-4; 
(3) training with the standard weight decay but with a small learning rate $0.003$ (compared to original learning rate $0.1$); and 
(4) training with mixup. 
See Figures \ref{fig:regularize} and \ref{fig:regularize2} for a summary of the results. 
Looking at these two figures together, we see that an increase in the boundary thickness through different training schemes reduces the generalization accuracy, as defined above, and hence the amount of non-robust features retained. 
Note that the natural test accuracy of the four source networks cannot explain the difference in Figure \ref{fig:regularize}, which are 0.943 (``normal''), 0.918 (``no decay''), 0.899 (``small lr''), and 0.938 (``mixup''). 
For instance, training with no weight decay has the highest generalization accuracy defined in the sense above, but its natural accuracy is only 0.918.

\section{Conclusions}

We introduce boundary thickness, a more robust notion of the size of the decision boundary of a machine learning model, and we provide a range of theoretical and empirical results illustrating its utility.
This includes that a thicker decision boundary reduces overfitting in adversarial training, and that it can improve both adversarial robustness and OOD robustness. 
Thickening the boundary can also reduce boundary tilting and the reliance on ``non-robust features.''
We apply the idea of thick boundary optimization to propose noisy mixup, and we empirically show that using noisy mixup improves robustness. 
We also show that boundary thickness reduces to margin in a special case, but in general it can be more useful than margin.
Finally, we show that the concept of boundary thickness is theoretically justified, by proving that boundary thickness reduces the worst-case boundary tilting and that mixup training achieves the minimax thickness.
Having proved a strong connection between boundary thickness and robustness, we expect that further studies can be conducted with thickness and decision boundaries as their focus.
We also expect that new learning algorithms can be introduced to increase explicitly boundary thickness during training, in addition to the low-complexity but relatively implicit way of noisy mixup.

\ifisarxiv
\else
\section{Broader Impact}

The proposed concept of boundary thickness can improve our fundamental understanding of robustness in machines learning and neural networks, and thus it provides new ways to interpret black-box models and improve existing robustness techniques. It can help researchers devise novel training procedures to combat data and model corruption, and it can also provide diagnosis to trained machine learning models and newly proposed regularization or data augmentation techniques.

The proposed work will mostly benefit safety-critical applications, e.g., autonomous driving and cybersecurity, and it may also make AI-based systems more reliable to natural corruptions and imperfections. These benefits are critical because there is usually a gap between the performance of learning-based systems on well-studied datasets and in real-life scenarios.

We will conduct further experimental and theoretical research to understand the limitations of boundary thickness, both as a metric and as a general guideline to design training procedures. We also believe that the machine learning community needs to conduct further research to enhance the fundamental understandings of the structure of decision boundaries and the connection to robustness. For instance, it is useful to design techniques to analyze and visualize decision boundaries during both training (e.g., how the decision boundary evolves) and testing (e.g., how to find defects in the boundary.) We believe this will benefit both the safety and accountability of learning-based systems.

\subsection*{Acknowledgments}
We would like to thank Zhewei Yao, Tianjun Zhang and Dan Hendrycks for their valuable feedback.
Michael W. Mahoney would like to acknowledge the UC Berkeley CLTC, ARO, IARPA (contract W911NF20C0035), NSF, and ONR for providing partial support of this work. 
Kannan Ramchandran would like to acknowledge support from NSF CIF-1703678 and CIF-2002821.
Joseph E. Gonzalez would like to acknowledge supports from NSF CISE Expeditions Award CCF-1730628 and gifts from Amazon Web Services, Ant Group, CapitalOne, Ericsson, Facebook, Futurewei, Google, Intel, Microsoft, Nvidia, Scotiabank, Splunk and VMware.
Our conclusions do not necessarily reflect the position or the policy of our sponsors, and no official endorsement should be~inferred.
\fi

\bibliography{reference}
\bibliographystyle{ieeetr}

\clearpage
\newpage

\appendix
\counterwithin{figure}{section}
\counterwithin{table}{section}

\ifisarxiv
\begin{center}
\huge
    Appendix
\end{center}
\else
\begin{center}
\LARGE
    Supplementary Materials of ``Boundary Thickness and Robustness in Learning Models''
\end{center}
\fi

\section{Mixup Increases Thickness}\label{sec:mixup_theory}

In this section, we show that mixup as well as the noisy mixup scheme studied in Section \ref{sec:noisy_mixup} both increase boundary thickness. 

Recall that $x_r$ and $x_s$ in~\eqref{eqn:def_ij} are not necessaraily from the training data. For example, $x_r $ and/or $x_s$ can be the noisy samples used in the noisy mixup (Section \ref{sec:noisy_mixup}). We make the analysis more general here because in different extensions of mixup \cite{zhang2017mixup,lamb2019interpolated,hendrycks2019augmix}, the mixed samples can either come from the training set, from adversarial examples constructed from the training set, or from carefully augmented samples using various forms of image transforms.

We consider binary classification and study the unnormalized empirical version of \eqref{eqn:def_ij} defined as follows:
\begin{equation}\label{eqn:thickness_empirical}
\Theta(f, \alpha, \beta) := \sum_{(x_i, x_j) \text{ s.t. } y_i\neq y_j}\|x_r-x_s\| \int_{t\in [0,1]} \mathbf{I}\{\alpha < g_{01}(x(t)) < \beta\} dt,
\end{equation}
\noindent
where the expectation in~\eqref{eqn:def_ij} is replaced by its empirical counterpart.
We now show that the function which achieves the minimum mixup loss is also the one that achieves minimax thickness for binary classification.

\begin{proposition}[Mixup Increases Boundary Thickness]\label{thm:mixup_minimax}
For binary classification, suppose there exists a function $f_\text{mixup}(x)$ that achieves exactly zero mixup loss, i.e., on all possible pairs of points $((x_r, y_r),  (x_s, y_s))$, $f_\text{mixup}(\lambda x_r+ (1-\lambda)x_s) = \lambda y_r+ (1-\lambda)y_s$ for all $\lambda \in [0,1]$. 
Then, for an arbitrary fixed integer $c>1$, $f_\text{mixup}(x)$ is also a solution to the following minimax problem:
\begin{equation}
     \arg\max_{f} \min_{(\alpha, \beta)} \Theta(f, \alpha, \beta), 
\end{equation}
where the boundary thickness $\Theta$ is defined in Eqn. \eqref{eqn:thickness_empirical}, the maximization is taken over all the 2D functions $f(x) = [f(x)_0, f(x)_1]$ such that $f(x)_0 + f(x)_1 = 1$ for all $x$, and the minimization is taken over all pairs of $\alpha, \beta\in (-1,1)$ such that $\beta-\alpha=1/c$.
\end{proposition}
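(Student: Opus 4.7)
The plan is to sandwich the quantity $\min_{(\alpha,\beta)} \Theta(f,\alpha,\beta)$ between two matching bounds: an exact evaluation for $f_{\text{mixup}}$, and a universal upper bound valid for every admissible $f$. Throughout, I would exploit that in the binary setting the constraint $f_0+f_1=1$ and $f\in[0,1]^2$ forces $g_{01}\in[-1,1]$ pointwise, so the line-segment function $h_{ij}(t) := g_{01}(x(t))$ takes values in $[-1,1]$. The core observation is that the zero-mixup-loss identity $f_{\text{mixup}}(\lambda x_r + (1-\lambda)x_s) = \lambda y_r + (1-\lambda) y_s$ forces $h_{ij}$ to be exactly the affine map $\pm(2t-1)$ on every pair with $y_i\neq y_j$ (after unpacking the one-hot labels and setting $\lambda=t$).

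First, I would compute $\Theta(f_{\text{mixup}},\alpha,\beta)$ directly. For any pair $(x_i,x_j)$ with $y_i\neq y_j$, $\{t\in[0,1]:\alpha < h_{ij}(t) < \beta\}$ is an interval of Lebesgue measure $(\beta-\alpha)/2 = 1/(2c)$, independent of which admissible $(\alpha,\beta)$ we pick. Denoting $S := \sum_{(i,j):\,y_i\neq y_j} \|x_i-x_j\|$, plugging into \eqref{eqn:thickness_empirical} yields
\begin{equation*}
\Theta(f_{\text{mixup}},\alpha,\beta) \;=\; \frac{S}{2c} \qquad \text{for every admissible } (\alpha,\beta),
\end{equation*}
so $\min_{(\alpha,\beta)} \Theta(f_{\text{mixup}},\alpha,\beta) = S/(2c)$.

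Second, I would establish that no $f$ can beat $S/(2c)$ by an averaging argument. Partition the open interval $(-1,1)$ into $2c$ disjoint pieces $I_k = \bigl(-1+(k-1)/c,\, -1+k/c\bigr)$, each of length $1/c$. For any $f$ and any pair $(i,j)$, the sets $h_{ij}^{-1}(I_k)\subseteq[0,1]$ are disjoint, so $\sum_{k=1}^{2c} \mu\bigl(h_{ij}^{-1}(I_k)\bigr) \le 1$. Weighting by $\|x_i-x_j\|$ and summing over pairs,
\begin{equation*}
\sum_{k=1}^{2c} \Theta(f, I_k) \;\le\; S,
\end{equation*}
and since the average of the $2c$ nonnegative summands is at most $S/(2c)$, some $k^\star$ achieves $\Theta(f, I_{k^\star}) \le S/(2c)$. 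Because the $I_k$ are admissible pairs (length $1/c$ and inside $(-1,1)$), we conclude $\min_{(\alpha,\beta)}\Theta(f,\alpha,\beta) \le S/(2c)$. Combining with the first step proves $f_{\text{mixup}}\in\arg\max_f\min_{(\alpha,\beta)}\Theta(f,\alpha,\beta)$.

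I expect the conceptual step of reducing the problem to the one-dimensional measure bound on $h_{ij}$ to be the only real subtlety; once that reduction is made, the two halves of the sandwich are short. The main obstacle is really bookkeeping: verifying that the zero-mixup-loss hypothesis applies to every pair appearing in the sum defining $\Theta$, including pairs of arbitrary training points with opposing labels (this is precisely what the hypothesis ``on all possible pairs'' delivers), and confirming that the constraint $\beta-\alpha = 1/c$ is respected by the partition intervals $I_k$. Boundary and measurability issues are harmless because the endpoints of $I_k$ form a finite set of measure zero.
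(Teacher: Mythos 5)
Your proof is correct, and its core combinatorial step is the same as the paper's: partition the range $(-1,1)$ into $2c$ intervals of length $1/c$, observe that the preimages under $h_{ij}$ are disjoint subsets of $[0,1]$ so their measures sum to at most $1$, and conclude by pigeonhole that some admissible $(\alpha,\beta)$ yields thickness at most $\|x_r-x_s\|/(2c)$ per segment; the exact computation for $f_{\text{mixup}}$ (affine $h$, hence preimage measure exactly $(\beta-\alpha)/2$ for every admissible pair) is likewise identical. The one place where you genuinely improve on the paper is the order of operations in the upper bound: the paper proves a per-segment minimax lemma and then asserts the proposition ``follows directly,'' which quietly ignores that the inner minimization must select a \emph{single} $(\alpha,\beta)$ shared by all segments (and $\min$ of a sum dominates the sum of per-segment minima, which is the wrong direction for an upper bound). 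By summing the weighted preimage measures over all pairs \emph{before} averaging over the $2c$ intervals, you extract one $k^\star$ that works uniformly, which is the rigorous way to close that step. One shared blemish: the extreme cells of the partition have $\alpha=-1$ or $\beta=1$, which technically falls outside the admissible open range $(-1,1)$; the paper's proof has the identical issue, and it is a removable boundary technicality (e.g., by a limiting or half-open-partition argument), not a substantive gap.
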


\begin{proof}
See Section \ref{proof:mixup} for the proof.
\end{proof}

\begin{remark}[Why Mixup is Preferred among Different Thick-boundary Solutions]\label{rem:mixup_reduce_oscillation}
\normalfont
Here, we only prove that mixup provides one solution, instead of the only solution. For example, between two samples $x_r$ and $x_s$ that have different labels, a piece-wise 2D linear mapping that oscillates between $[0,1]$ and $[1,0]$ for more than once can achieve the same thickness as that of a linear mapping. However, a function that exhibits unnecessary oscillations becomes less robust and more sensitive to small input perturbations. Thus, the linear mapping achieved by mixup is preferred. According to \cite{zhang2017mixup}, mixup can also help reduce unnecessary oscillations. 
\end{remark}

\begin{remark}[Zero Loss in Proposition \ref{thm:mixup_minimax}]
\normalfont
Note that the function $f_\text{mixup}(x)$ in the proposition is the one that perfectly fits the mixup augmented dataset. In other words, the theorem above needs $f_\text{mixup}(x)$ to have ``infinite capacity,'' in some sense, to match perfectly the response on line segments that connect pairs of points $(x_r,x_s)$. If such $f_\text{mixup}(x)$ does not exist, it is unclear if an approximate solution achieves minimax thickness, and it is also unclear if minimizing the cross-entropy based mixup loss is exactly equivalent to minimizing the minimax boundary thickness for the same loss value. Nonetheless, our experiments show that mixup consistently achieves thicker decision boundaries than ordinary training (see Figure \ref{fig:thickness_adv_direction}).
\end{remark}
\section{Proofs}

\subsection{Proof of Proposition \ref{prop:linear}}\label{proof:linear}

Choose $(x_r, x_s)$ so that $x_s - x_r = cw$. The thickness of $f$ defined in \eqref{eqn:def_ij} becomes
\begin{equation}\label{eqn:thickness_linear}
\Theta(f, \alpha, \beta, x_s, x_r) = c\|w\| \mathbb{E}_p\left[\int_0^1 \mathbf{I} \{\alpha < g_{01}(x(t)) <\beta\}dt\right].
\end{equation} 
Define a substitute variable $u$ as:
\begin{equation}
    u = tw^\top x_r + (1-t)w^\top x_s + b.
\end{equation}
Then, 
\begin{equation}
    du = (w^\top x_r - w^\top x_s)dt = w^\top (-cw) dt = -c\|w\|^2 dt.
\end{equation}
Further, 
\begin{equation}
    f(t x_r + (1-t) x_s)_0 - f(t x_r + (1-t) x_s)_1 = 2\sigma(t w^\top x_r + (1-t) w^\top x_s + b) - 1 = 2\sigma(u)-1 = \tilde{g}(u).
\end{equation}
Thus,
\begin{equation}
\begin{split}
    \Theta(f, \alpha, \beta, p) \overset{(a)}{=} &c\|w\| \mathbb{E}\left[\int_0^1 \mathbf{I} \{\alpha < f(t x_r + (1-t) x_s)_0 - f(t x_r + (1-t) x_s)_1 <\beta\}dt\right] \\
    \overset{(b)}{=}& \mathbb{E}\left[\int_{w^\top x_s + b}^{w^\top x_r + b} \mathbf{I}\{\alpha < \tilde{g}(u) < \beta\} \left(-\frac{1}{\|w\|}\right)du\right]\\
    \overset{(c)}{=} &\frac{1}{\|w\|} \mathbb{E}\left[\int_{w^\top x_r + b}^ {w^\top x_s + b} \mathbf{I}\{\alpha < \tilde{g}(u) < \beta\} du\right],
\end{split}
\end{equation}
where $(a)$ holds because $g_{01}(x(t)) = f(x(t))_0 - f(x(t))_1 = f(t x_r + (1-t) x_s)_0 - f(t x_r + (1-t) x_s)_1$, $(b)$ is from substituting $u = tw^\top x_r + (1-t)w^\top x_s + b$ and $du = -c\|w\|^2 dt$, and $(c)$ is from switching the upper and lower limit of the integral to get rid of the negative sign. Recall that $\tilde{g}(u)$ is a monotonically increasing function in $u$. Thus, 
\begin{equation}
\begin{split}
    \Theta(f, \alpha, \beta, p)  = & \frac{1}{\|w\|} \int_{w^\top x_r + b}^{w^\top x_s + b} \mathbf{I}\{\tilde{g}^{-1}(\alpha) < u < \tilde{g}^{-1}(\beta)\} du \\
= & \frac{1}{\|w\|} \mathbb{E}\left[\min(\tilde{g}^{-1}(\beta),w^\top x_s + b) - \max(\tilde{g}^{-1}(\alpha), w^\top x_r + b)\right].
\end{split}
\end{equation}
Further, if $[\alpha,\beta]$ is contained in $[g_{01}(x_r), g_{01}(x_s)]$, we have $\tilde{g}^{-1}(\beta) < \tilde{g}^{-1}(g_{01}(x_s)) = \tilde{g}^{-1}(\tilde{g}(w^\top x_s + b))
= w^\top x_s + b$, and similarly,  $\tilde{g}^{-1}(\alpha) > w^\top x_r + b$, and thus
\begin{equation}
    \Theta(f, \alpha, \beta) = (\tilde{g}^{-1}(\beta) - \tilde{g}^{-1}(\alpha))/\|w\|.
\end{equation}

\subsection{Proof of Proposition \ref{prop:margin}}\label{proof:margin}

The conclusion holds if $\|x_r-x_s\| \int_0^1 \mathbf{I} \{\alpha < g_{ij}(x(t)) <\beta\}dt$ equals the $\ell_2$ distance from $x_r$ to its projection $\text{Proj}(x,j)$ for $\alpha=0$ and $\beta=1$. Note that when $x = x_s$, $g_{ij}(x) = 0$, because $x_s = \text{Proj}(x_r,j)$. From the definition of projection, i.e., $\text{Proj}(x, j) = \arg\min_{x' \in S(i_x,j)} \| x' - x\|$, we have that for all points $x$ on the segment from $x_r$ to $x_s$, $x_s$ is only point with $g_{ij}(x) = 0$. Otherwise, $x_s$ is not the projection. Therefore, all points $x$ on the segment satisfy $g_{ij}(x) > 0 =\alpha$. Since $f$ is the output after the softmax layer, $g_{ij}(x) = f(x)_{i} - f(x)_{j} < 1 = \beta$. Thus, the indicator function on the left-hand-side of \eqref{eqn:thickness_reduces_to_margin} takes value 1 always, and the integration reduces to calculating the distance from $x_r$ to $x_s$.

\subsection{Proof of Proposition \ref{prop:tilting}}\label{proof:tilting}
We rewrite the definition of $T(u)$ as
\begin{equation}\label{eqn:CSu}
    T(u) := \min_{v\text{ s.t. }\|v\| = u \text{ and } y_i v^\top x_i\ge 1, \forall i } \text{Cosine Similarity}(v, w^*),
\end{equation}
where 
\begin{equation}\label{eqn:tilting_linear}
    \text{Cosine Similarity}(v, w^*) := |v^\top w^*|/(\|v\| \cdot \|w^*\|).
\end{equation}
To prove $T(u)$ is a non-increasing function in $u$, we consider arbitrary $u_1, u_2$ so that $u_1>u_2\ge\|w^*\|$, and we prove $T(u_1)\le T(u_2)$. 

First, consider $T(u_2)$. Denote by $w_2$ the linear classifier that achieves the minimum value in the RHS of \eqref{eqn:CSu} when $u=u_2$. From definition, $u_2 = \|w_2\|$. Now, if we increase the norm of $w_2$ to obtain a new classifier $w_1 = \frac{u_1}{u_2} w_2$, it still satisfies the constraint $y_i w_1 x_i\ge 1, \forall i$ because 
\begin{equation}
    y_i w_1 x_i = \frac{u_1}{u_2} y_i w_2 x_i \ge \frac{u_1}{u_2} >1.
\end{equation}
Thus, $w_1= \frac{u_1}{u_2} w_2$ satisfies the constraints in \eqref{eqn:CSu} for $u=\|w_1\| = u_1$, and being a linear scaling of $w_2$, it has the same cosine similarity score with $w^\star$~\eqref{eqn:tilting_linear}, which means the worst-case tilting $T(u_1)$ should be smaller or equal to the tilting of $w_1$. 

\subsection{Proof of Proposition \ref{thm:mixup_minimax}}\label{proof:mixup}

We can rewrite \eqref{eqn:thickness_empirical} using
\begin{equation}
\Theta(f, \alpha, \beta) := \sum_{(x_i, x_j) \text{ s.t. } y_i\neq y_j}\Theta_\text{1D}(f, \alpha, \beta, x_r, x_s),
\end{equation}
where $\Theta_\text{1D}(f, \alpha, \beta, x_r, x_s)$ denotes the thickness measured on a single segment, i.e.,
\begin{equation}\label{eqn:thickness_1D}
    \Theta_\text{1D}(f, \alpha, \beta, x_r, x_s) := \|x_r-x_s\| \int_{t\in [0,1]} \mathbf{I}\{\alpha < g_{01}(x(t)) < \beta\} dt,
\end{equation}
where recall that $g_{01}(x) = f(x)_0 - f(x)_1$ and $x(t) = tx_r+(1-t)x_s$.

Since the proposition is stated for the sum on all pairs of data, we can focus on the proof of an arbitrary pair of data $(x_r, x_s)$ such that $y_r\neq y_s$. 

Consider any 2D decision function $f(x) = [f(x)_0, f(x)_1]$ such that $f(x)_0+f(x)_1 = 1$ (i.e., $f(x)$ is a probability mass function). In the following, we consider the restriction of $f(x)$ on a segment $(x_r, x_s)$, which we denote as $f_{(x_r, x_s)}(x)$. Then, the proof relies on the following lemma, which states that the linear interpolation scheme in mixup training does maximize the boundary thickness on the segment.

\begin{lemma}\label{lmm:straight_line}
For any arbitrary fixed integer $c>0$, the linear function 
\begin{equation}\label{eqn:lin}
f_\text{lin}(t x_r+ (1-t)x_s)= [t, 1-t], t\in [0,1],
\end{equation}
defined for a given segment ($x_r$, $x_s$) optimizes $\Theta_\text{1D}(f,\alpha,\beta,x_r,x_s)$ in \eqref{eqn:thickness_1D} in the following minimax sense,
\begin{equation}\label{eqn:minimax}
    f_\text{lin}(x) = \arg\max_{f_{(x_r, x_s)}(x)} \min_{(\alpha, \beta)} \Theta_\text{1D}(f_{(x_r, x_s)}(x), \alpha, \beta, x_r, x_s),
\end{equation}
where the maximization is over all the 2D functions $f_{(x_r, x_s)}(x) = [f_{(x_r, x_s)}(x)_0,f_{(x_r, x_s)}(x)_1]$ such that the domain is restricted to the segment ($x_r$, $x_s$) and such that $f_{(x_r, x_s)}(x)_0, f_{(x_r, x_s)}(x)_1\in [0,1]$ and $f_{(x_r, x_s)}(x)_0+f_{(x_r, x_s)}(x)_1 = 1$ for all $x$ on the segment, and the minimization is taken over all pairs of $\alpha, \beta\in (-1,1)$ such that $\beta-\alpha=1/c$.
\end{lemma}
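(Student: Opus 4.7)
\textbf{Proof proposal for Lemma \ref{lmm:straight_line}.}

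The plan is to compute both sides of the minimax directly. First I would observe that because $f(x)_0 + f(x)_1 = 1$ we have $g_{01}(x) = 2 f(x)_0 - 1 \in [-1,1]$, so as a function of $t \in [0,1]$, $g_{01}(x(t))$ takes values in the interval $[-1,1]$. The idea is that the minimax value is exactly $\|x_r - x_s\|/(2c)$, and that it is attained by $f_\text{lin}$.

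\emph{Step 1: Evaluate the objective at $f_\text{lin}$.} For the linear function in \eqref{eqn:lin}, $g_{01}(x(t)) = t - (1-t) = 2t-1$, which is a bijection $[0,1] \to [-1,1]$ with constant derivative $2$. Hence, for every admissible pair $(\alpha,\beta) \subset (-1,1)$ with $\beta - \alpha = 1/c$, a change of variables gives
\begin{equation}
\Theta_\text{1D}(f_\text{lin}, \alpha, \beta, x_r, x_s) = \|x_r - x_s\| \int_0^1 \mathbf{I}\{\alpha < 2t - 1 < \beta\}\, dt = \frac{\|x_r - x_s\|}{2c}.
\end{equation}
In particular, the inner minimum over $(\alpha,\beta)$ for $f_\text{lin}$ equals $\|x_r - x_s\|/(2c)$.

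\emph{Step 2: Upper bound the inner minimum for any competitor $f$.} This is the delicate step. The plan is a pigeonhole argument on the range of $g_{01}$. Partition the interval $[-1,1]$ into $2c$ consecutive sub-intervals of equal length $1/c$, say $I_k = \bigl(-1 + (k-1)/c,\, -1 + k/c\bigr)$ for $k = 1, \dots, 2c$ (endpoints redistributed to make them disjoint). Define the preimages $T_k = \{t \in [0,1] : g_{01}(x(t)) \in I_k\}$. Since $g_{01}(x(t)) \in [-1,1]$ for every $t$, the sets $T_k$ cover $[0,1]$ up to a Lebesgue-null set, so $\sum_{k=1}^{2c} |T_k| \le 1$. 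By pigeonhole there exists $k^\star$ with $|T_{k^\star}| \le 1/(2c)$. Setting $(\alpha^\star, \beta^\star)$ to the endpoints of $I_{k^\star}$ yields an admissible pair with $\beta^\star - \alpha^\star = 1/c$ and
\begin{equation}
\Theta_\text{1D}(f, \alpha^\star, \beta^\star, x_r, x_s) \le \|x_r - x_s\| \cdot |T_{k^\star}| \le \frac{\|x_r - x_s\|}{2c}.
\end{equation}
Therefore $\min_{(\alpha,\beta)} \Theta_\text{1D}(f, \alpha, \beta, x_r, x_s) \le \|x_r - x_s\|/(2c)$ for every admissible $f$.

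\emph{Step 3: Combine.} Steps 1 and 2 together show that $f_\text{lin}$ attains the value $\|x_r - x_s\|/(2c)$ in the inner minimization and no admissible $f$ attains a strictly larger value, which is exactly the content of \eqref{eqn:minimax}. The main obstacle is step 2, where one must be careful about measurability of $g_{01} \circ x$ and about handling the endpoints of the $I_k$ (which contribute zero measure and therefore do not affect the bound). Once the pigeonhole is in place, the proposition of the paper on binary classification will follow by summing the per-segment inequalities over all pairs $(x_i, x_j)$ with $y_i \ne y_j$ and invoking the hypothesis that $f_\text{mixup}$ is linear (hence coincides with $f_\text{lin}$) on each such segment.
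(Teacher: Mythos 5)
Your proposal is correct and is essentially the paper's own argument: the paper proves the same upper bound of $\tfrac{\beta-\alpha}{2}=\tfrac{1}{2c}$ on the inner minimum by partitioning $[-1,1]$ into the $2c$ intervals $\left(\tfrac{i-1}{c},\tfrac{i}{c}\right)$ and summing preimage measures, merely phrasing your pigeonhole step as a proof by contradiction, and then checks as you do that $h_{\mathrm{lin}}(t)=2t-1$ attains $\tfrac{1}{2c}$ for every admissible $(\alpha,\beta)$. One cosmetic remark: the bound $\sum_k |T_k|\le 1$ follows from the disjointness of the $T_k$ inside $[0,1]$ rather than from their covering $[0,1]$ up to a null set (which can fail if $g_{01}\circ x$ sits on an endpoint of a cell on a set of positive measure), but that is exactly the direction of inequality you need, so nothing breaks.
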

\begin{proof}See Section \ref{sec:1D_proof} for the proof.\end{proof}

Now, Proposition \ref{thm:mixup_minimax} follows directly from Lemma \ref{lmm:straight_line}.

\subsection{Proof of Lemma \ref{lmm:straight_line}}\label{sec:1D_proof}
In this proof, we simplify the notation and use $f(x)$ to denote $f_{(x_r, x_s)}(x)$ which represents $f(x)$ restricted to the segment $(x_r, x_s)$. This simple notation does not cause any confusion because we restrict to the segment $(x_r, x_s)$ in this proof.

We can simplify the proof by viewing the optimization over functions $f(x)$ on the fixed segment $(x_r, x_s)$ as optimizing over the functions $h(t) = f(x(t))_1-f(x(t))_0$ on $t\in [0,1]$, where $x(t) = tx_r +(1-t)x_s$. 

Thus, we only need to find the function $f(x)$, when viewed as a one-dimensional function $h(t) = f(x(t))_1 - f(x(t))_0 = f(tx_r + (1-t)x_s)_1 - f(tx_r + (1-t)x_s)_0$, that solves the minimax problem \eqref{eqn:minimax} for the thickness defined as:
\begin{equation}\label{eqn:thickness_01}
\begin{split}
    \Theta_\text{1D}(f(t)) 
    =& \|x_r-x_s\| \int_{t\in [0,1]} \mathbf{I}\{\alpha < h(t) < \beta\} dt\\
    =& \|x_r-x_s\| |h^{-1}((\alpha, \beta))| ,
\end{split}
\end{equation}
where $h^{-1}$ is the inverse function of $h(t)$. Note that $h^{-1}((\alpha, \beta)) \subset [0,1]$. To prove the result, we only need to prove that the linear function $h_\text{lin}(t) = 2t-1$, which is obtained from $h(t) = f(t x_r+ (1-t)x_s)_1 - f(t x_r+ (1-t)x_s)_0$ for $f_\text{lin}(t x_r+ (1-t)x_s)= [t, 1-t]$ defined in \eqref{eqn:lin}, solves the minimax problem 
\begin{equation}\label{eqn:1D_minimax}
\begin{split}
    & \arg\max_{h(t)} \min_{(\alpha, \beta)} \Theta_\text{1D}(h(t))
     = \arg\max_{h(t)} \min_{(\alpha, \beta)} |h^{-1}((\alpha, \beta))|,
\end{split}
\end{equation}
where the maximization is taken over all $h(t)$, and the minimization is taken over all pairs of $\alpha, \beta\in (-1,1)$ such that $\beta-\alpha=1/c$, for a fixed integer $c>1$. 

Now we prove a stronger statement.

{\bf Stronger statement:} 
\begin{equation}
    \min_{(\alpha, \beta)} |h^{-1}((\alpha, \beta))| \le \frac{\beta - \alpha}{2},
\end{equation}
when the minimization is taken over all $\alpha, \beta$ such that $\alpha - \beta = 1/c$, and for any measurable function $h(t): [0,1]\to [-1,1]$. 

If we can prove this statement, then, since $h_{\text{lin}}(t)=2t-1$ always achieves $h^{-1}((\alpha, \beta)) = \frac{\beta+1}{2} - \frac{\alpha+1}{2} = \frac{\beta-\alpha}{2}$, it is indeed the minimax solution of \eqref{eqn:1D_minimax}.

We prove the stronger statement above by contradiction. Suppose that the statement is not true, i.e., for any $\alpha$ and $\beta$ such that $\beta - \alpha = 1/c$, we always have 
\begin{equation}\label{eqn:contradiction}
 |h^{-1}((\alpha, \beta))| >  \frac{\beta-\alpha}{2} = \frac{1}{2c}.
\end{equation}
Then, the pre-image of $[-1,1]$ satisfies
\begin{equation}
\begin{split}
1\ge &  |h^{-1}([-1,1])|\\
= &\sum_{i=-c+1}^c \left|h^{-1}\left((\frac{i-1}{c}, \frac{i}{c})\right)\right| \\
\overset{(a)}{>}& 2c\cdot \frac{1}{2c} = 1,
\end{split}
\end{equation}
where the last inequality holds because of the inequality \eqref{eqn:contradiction}. This is clearly a contradiction, which means that the stronger statement is true.

\section{A Chessboard Toy Example}\label{sec:chess_board}

\begin{figure}[ht]
\centering
\begin{tabular}{ccccc}
\hline
\hline
 & {\bf 2D chessboard data} & & {\bf 2D classifier} & \\
\hline
 & \includegraphics[width=.125\columnwidth,height=.125\columnwidth,keepaspectratio,valign=m,margin=.0cm .05cm]{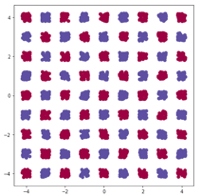} & &
\includegraphics[width=.125\columnwidth,height=.125\columnwidth,keepaspectratio,valign=m,margin=.0cm .05cm]{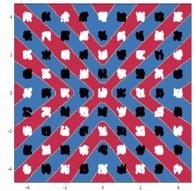} & \\
\hline
\hline
{\bf 3D Robust classifier} & & & {\bf 3D Non-robust classifier} & \\
Front view & Top-down view & & Front view & Top-down view \\
\hline
\includegraphics[width=.125\columnwidth,height=.125\columnwidth,keepaspectratio,valign=m,margin=.0cm .0cm]{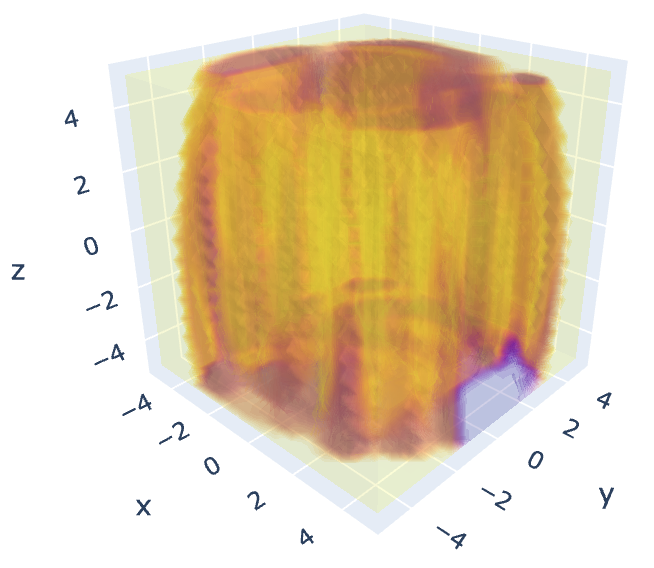}
&
\includegraphics[width=.125\columnwidth,height=.125\columnwidth,keepaspectratio,valign=m,margin=.0cm .05cm]{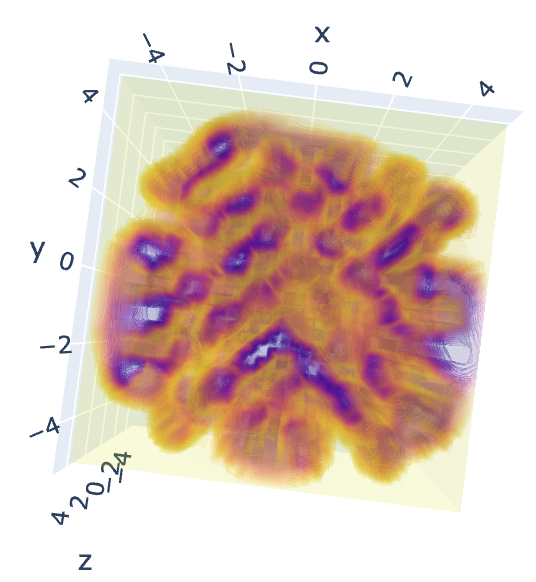} 
& 
\includegraphics[width=.025\columnwidth,height=.125\columnwidth,keepaspectratio,valign=m,margin=.0cm .05cm]{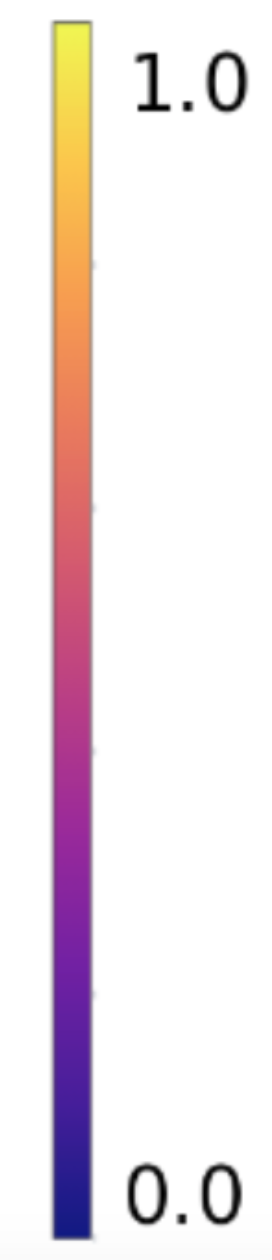} 
&
\includegraphics[width=.125\columnwidth,height=.125\columnwidth,keepaspectratio,valign=m,margin=.0cm .0cm]{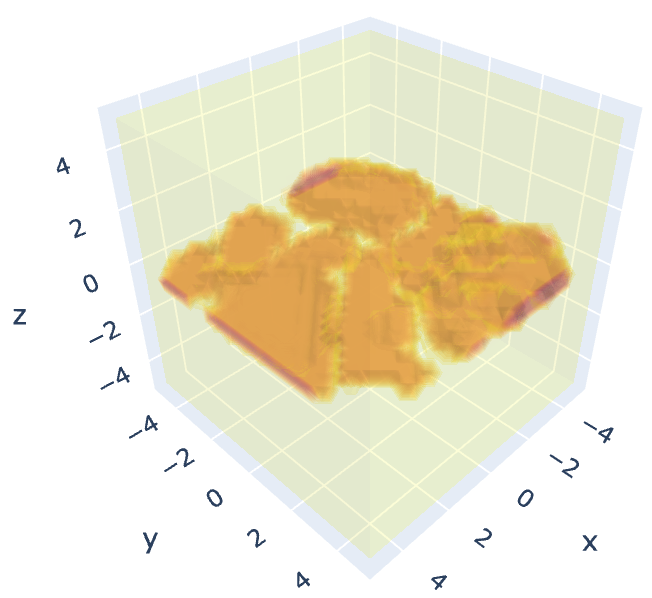}
&
\includegraphics[width=.125\columnwidth,height=.125\columnwidth,keepaspectratio,valign=m,margin=.0cm .05cm]{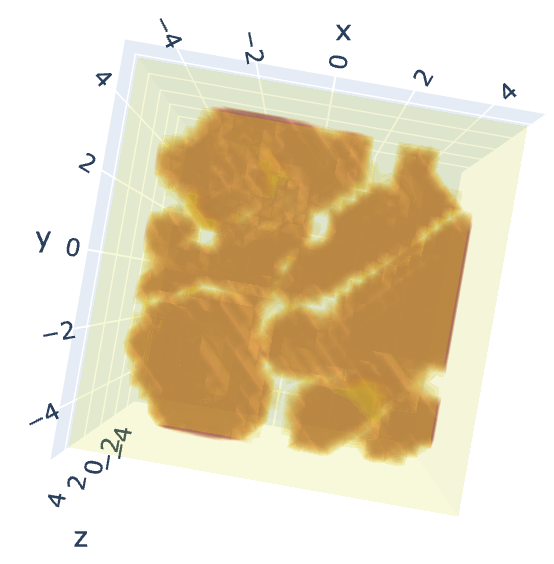} 
\\
\hline
\hline
\multicolumn{5}{c}{\bf Interpolation between robust and non-robust classifier}\\
\hline
\multicolumn{5}{c}{\includegraphics[width=.99\textwidth]{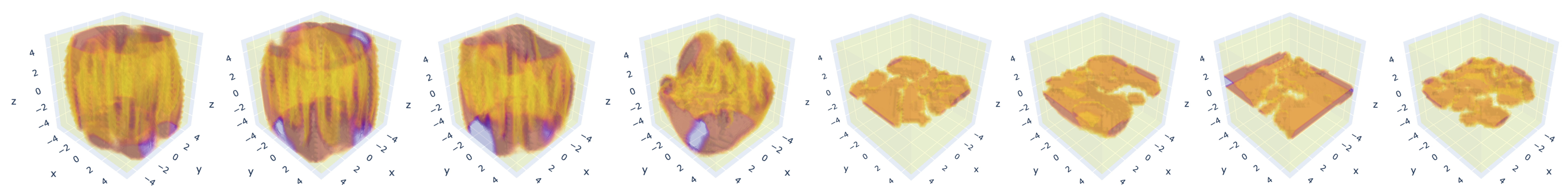}}\\
\hline
\hline
\end{tabular}
\medskip
\caption{{\bf Chessboard toy example.} The 3D visualizations above use the color map that ranges from yellow (value 1) to purple (value 0) to illustrate the predicted probability $\Pr(y=1|x)\in [0,1]$ in binary classification. Each 3D figure draws 17 level sets of different colors from 0 to 1.
\\
{\bf First row: } The 2D chessboard dataset with two classes and a 2D classifier that learns the correct pattern.\\
{\bf Second row:} 3D visualization of decision boundaries of two different classifiers. {\bf (left) }A classifier that uses robust $x$ and $y$ directions to classify, which preserves the complex chessboard pattern (see the top-down view which contains a chessboard pattern.)
{\bf (right) }A classifier that uses the non-robust direction $z$ to classify.
When the separable space on the non-robust direction is large enough, the thin boundary squeezes in and generates a simple but non-robust function.\\
{\bf Third row: }
Visulization of the decision boundary as we interpolate between the robust and non-robust classifiers. There is a sharp transition from the fourth to the fifth figure.
}\label{fig:chess_board}
\end{figure}

In this section, we use a chessboard example in low dimensions to show that nonrobustness arises from thin boundaries. Being a toy setting, this is limited in the generality, but it can visually demonstrate our main~message. 

In Figure \ref{fig:chess_board}, the two subfigures shown on the first row represent the chessboard dataset and a robust 2D function that correctly classifies the data. Then, we project the 2D chessboard data to a 100-dimensional space by padding noise to each 2D point. In this case, the neural network can still learn the chessboard pattern and preserve the robust decision boundary (see the 3D top-down view on the left part of the second row which contains the chessboard pattern).

However, if we randomly perturb each square of samples in the chessboard in the 3rd dimension (the $z$ axis) to change the space between these squares, such that the boundary has enough space on the $z$-axis to partition the two opposite classes, the boundary changes to a non-robust one instead (see the right part on the second row of Figure \ref{fig:chess_board}). The shift value on the $z$ axis is $0.05$ which is much smaller than the distance between two adjacent squares, which is 0.6. The data are not linearly separable on the $z$-axis because each square on the chessboard is randomly shifted up or down independently of other squares. 

A more interesting result can be obtained by varying the shift on the $z$ axis from 0.01 to 0.08. See the third row of Figure \ref{fig:chess_board}. The network undergoes a sharp transition from using robust decision boundaries to using non-robust ones. This is consistent with the main message shown in Figure \ref{fig:max-margin}, i.e., that neural networks tend to generate thin and non-robust boundaries to fit in the narrow space between opposite classes on the non-robust direction, while a thick boundary mitigates this effect. On the third row, from left to right, the expanse of the data on $z$-axis increases, allowing the network to use only the $z$-axis to classify. %

{\bf Details of 3D figure generation:} For the visualization in Figure \ref{fig:chess_board}, each 3D figure is generated by plotting 17 consecutive level sets of neural network prediction values (after the softmax layer) from 0 to 1. The prediction on each level set is the same, and each level set is represented by a colored continuous surface in the 3D figure. The yellow end of the color bar represents a function value of 1, and the purple end represents 0. The visualization is displayed in a 3D orthogonal subspace of the whole input space. The three axes are the first three vectors in the natural basis. They represent the $x$ and $y$ directions that contain the chessboard pattern, and the $z$ axis that contains the direction of shift values. 

\textbf{Details of the chessboard toy example: }The chessboard data contains 2 classes of 2D points arranged in $9\times 9 = 81$ squares. Each square contains 100 randomly generated 2D points uniformly distributed in the square. The length of each square is 0.4, and the separation between two adjacent squares is 0.6. The shift direction (up or down) and value on the $z$-axis are the same for all 2D points in a single square, and the shift value is much smaller than 1 (which is the distance between the centers of two squares). See the third row on Figure \ref{fig:chess_board} for different shift values ranging from 0.01 to 0.08. The shift value is, however, independent across different squares, i.e., these squares cannot be easily separated by a linear classifier using information on the $z$-axis only. The classifier is a neural network with 9 fully-connected layers and a residual link on each layer. The training has 100 epochs, an initial learning rate of 0.003, batch size 128, weight decay 5e-4, and momentum 0.9.
\section{A Thick Boundary Mitigates Boundary Tilting}\label{sec:thick_boundary_reduces_tilting}

\begin{figure}
    \centering
    \begin{subfigure}{0.32\textwidth}
    \vspace{-5mm}
        \includegraphics[width=.98\linewidth]{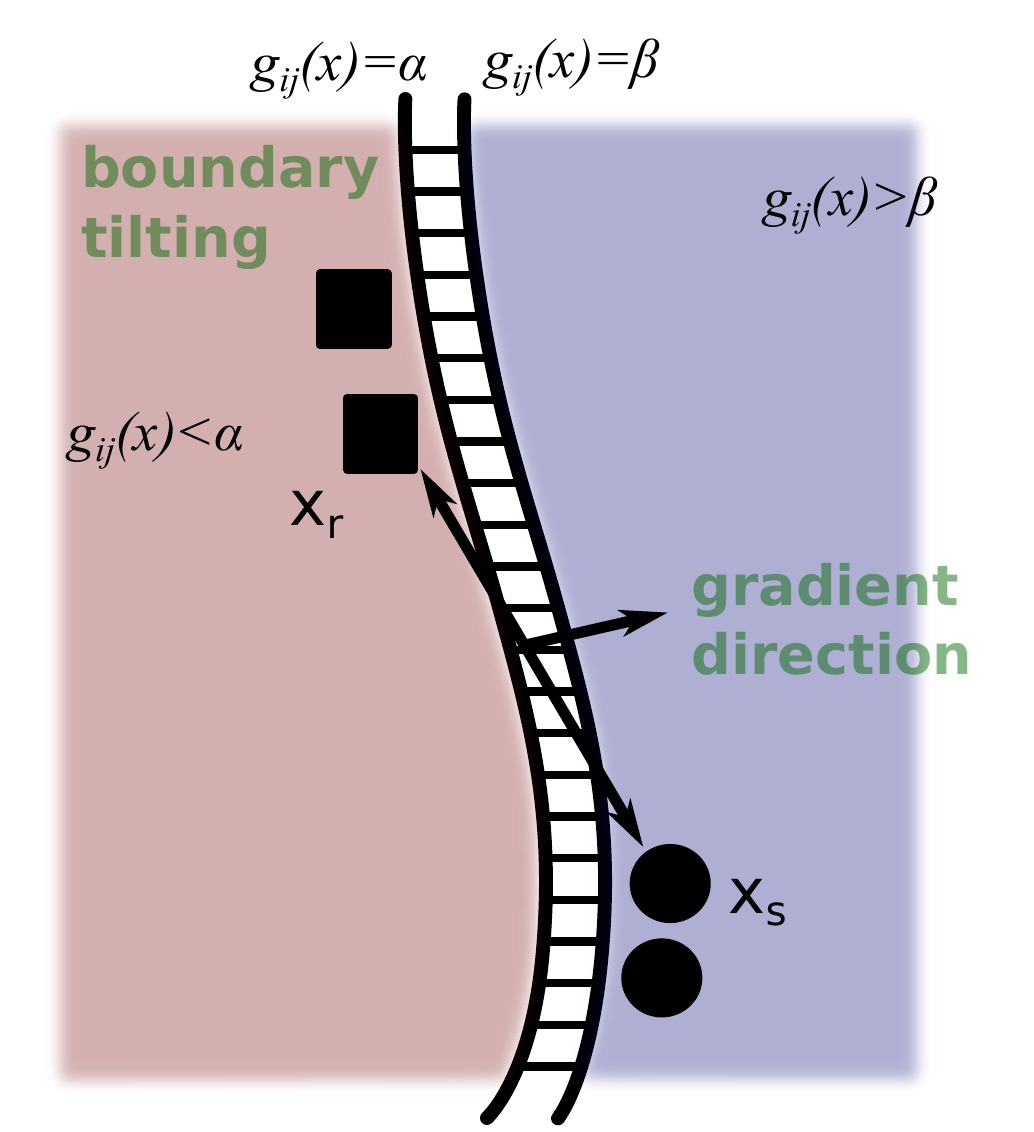}
        \vspace{3mm}
        \subcaption{\label{fig:tilt_explain}}
    \end{subfigure}
    \begin{subfigure}{0.45\textwidth}
        \includegraphics[width=.98\linewidth]{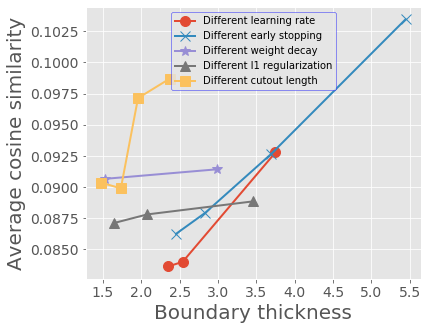}
        \subcaption{\label{fig:tilt_exp}}
    \end{subfigure}
    \caption{{\bf Thickness and boundary tilting.} (a) The cosine similarity between the gradient direction and $x_s-x_r$ generalizes the measurement of ``boundary tilting'' to nonlinear functions. (b) Boundary tilting can be mitigated by using a thick decision boundary.}
    \label{fig:tilt}
\end{figure}

In this section, we generalize the observation of Proposition \ref{prop:tilting} to nonlinear classifiers. Recall that in Proposition \ref{prop:tilting}, we use Cosine Similarity $(w, w^*) = |w^\top w^*|/(\|w\| \cdot \|w^*\|)$ between the classifier $w$ and the max-margin classifier $w^*$ to measure boundary tilting. To measure boundary tilting in the nonlinear case, we use $x_1-x_2$ of random sample pairs ($x_1$, $x_2$) from the training set to replace the normal direction of the max-margin solution $w^*$, and use $\nabla g_{ij}(x) =\nabla (f(x)_i - f(x)_j)$ to replace the normal direction of a linear classifier $w$, where $i, j$ are the predicted labels of $x_1$ and $x_2$, respectively, and $x$ is a random point on the line segment $(x_1 , x_2)$. 
 Then, the cosine similarity generalizes to
\begin{equation}\label{eqn:cos}
    \text{Average Cosine Similarity} = \mathbb{E}_{(x_1,x_2)\sim \text{training distribution s.t. }i\neq j }\left[\frac{|\langle x_1 - x_2, \nabla_x g_{ij}(x) \rangle |}{\|x_1 - x_2\|\|\nabla_x g_{ij}(x)\|}\right].
\end{equation}

In Figure \ref{fig:tilt_explain}, we show the intuition underlying the use of~\eqref{eqn:cos}. The smaller the cosine similarity is, the more severe the impact of boundary tilting becomes. 

We also measure boundary tilting in various settings of adversarial training, and we choose the same set of hyper-parameters that are used to generate Figure \ref{fig:adv_thickness_all}. See the results in Figure \ref{fig:tilt_exp}. When measuring cosine similarity, we average the results over 6400 training sample pairs. From the results shown in Figure \ref{fig:tilt_exp}, a thick boundary mitigates boundary tilting by increasing the cosine similarity.

\section{Additional Experiments on Non-adversarial Training}\label{sec:non_adv_details}

In this section, we provide more details and additional experiments extending the results of Section \ref{sec:regularization} on non-adversarially trained neural networks. We demonstrate that a thick boundary improves OOD robustness \textcolor{black}{when the thickness is measured using different choices of hyper-parameters}. We also show that the same conclusion holds on two other datasets, namely CIFAR100 and SVHN, in addition to CIFAR10 used in the main paper.

\subsection{Details of measuring boundary thickness}\label{sec:details_measure_thickness}

Boundary thickness is calculated by integrating on the segments that connect a sample with its corresponding adversarial sample. We find the adversarial sample by using an $\ell_2$ PGD attack of size 1.0, step size 0.2, and number of attack steps 20. We measure both thickness and margin on the normalized images in CIFAR10, which introduces a multiplicity factor of approximately 5 when using the standard deviations $(0.2023, 0.1994, 0.2010)$, respectively, for the RGB channels compared to measuring thickness on unnormalized~images. 

To compute the integral in \eqref{eqn:def_ij}, we connect the segment from $x_r$ to $x_s$ and evaluate the neural network response on 128 evenly spaced points on the segment. Then, we compute the cumulative $\ell_2$ distance of the parts on this segment for which the prediction value is between $(\alpha, \beta)$, which measures the distance between two level sets $g_{ij}(x) = \alpha$ and $g_{ij}(x) = \beta$ on this segment (see equation \eqref{eqn:def_ij}). Finally, we report the average thickness obtained by repeated runs on 320 segments, i.e., 320 random samples with their adversarial~examples.

\subsection{Comparing different measuring methods: tradeoff between complexity and accuracy}\label{sec:compare_different_thickness}

In this section, we discuss the choice of distribution $p$ when selecting segments $(x_r, x_s)$ to measure thickness. Recall that in the main paper, we choose $x_s$ as an adversarial example of $x_r$. Another way, which is computationally cheaper, is to measure thickness on the segment directly between pairs of samples in the training dataset, i.e., sample $x_r$ randomly from the training data, and sample $x_s$ as a random data point with a different label.

Although computationally cheaper, this way of measuring boundary thickness is more prone to the ``boundary-tilting'' effect, because the connection between a pair of samples is not guaranteed to be orthogonal to the decision boundary. \textcolor{black}{Thus, the boundary tilting effect can inflate the value of boundary thickness. This effect only happens when we measure thickness on pairs of samples instead of measuring it in the adversarial direction, which we have shown to be able to mitigate boundary tilting when the thickness is large (see Section~\ref{sec:thick_boundary_reduces_tilting}).}

In Figure \ref{fig:thickness_sample_pairs}, we show how this method affects the measurement of thickness. The thickness is measured for the same set of models and training procedures as those shown in Figure \ref{fig:thickness_adv_direction}, but on random segments that connect pairs of samples. We use $\alpha = 0$ and $\beta=0.75$ to match Figure \ref{fig:thickness_adv_direction}. In Figure \ref{fig:thickness_sample_pairs}, although the trend remains the same (i.e., mixup$>$normal$>$training without weight decay), all the measurement values of boundary thickness become much bigger than that of Figure \ref{fig:thickness_adv_direction}, indicating boundary tilting in all the measured networks.

\begin{figure}
    \centering
     \includegraphics[width=\linewidth]{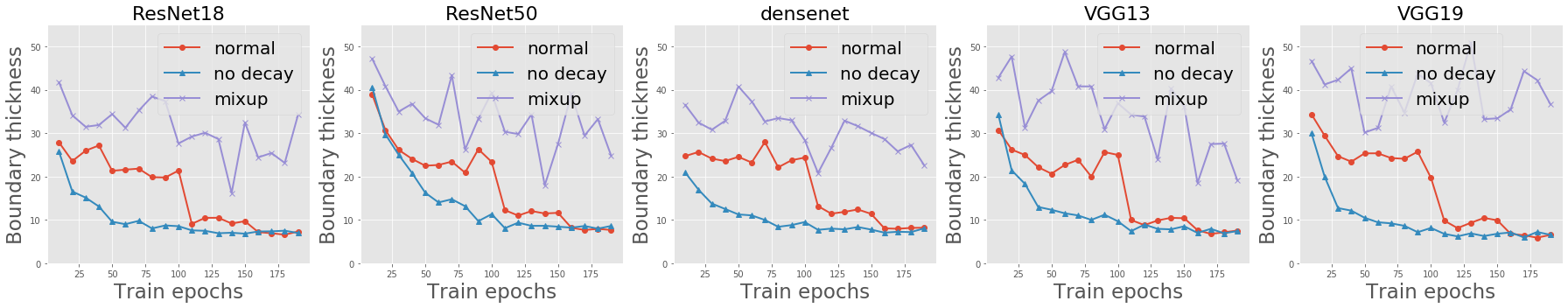}
    \caption{{\bf Thickness on random sample pairs.} Measuring the boundary thickness in the same experimental setting as Figure \ref{fig:thickness_adv_direction}, but on pairs of random samples. The trend that mixup $>$ normal training $>$ training without weight decay remains the same.}
    \label{fig:thickness_sample_pairs}
\end{figure}

\begin{remark}[An Oscillating 1D Example Motivates the Adversarial Direction]
\normalfont
Obviously, the distribution $p$ in Definition \ref{def:boundary_thickness} is vital in dictating robustness.
Similar to Remark \ref{rem:mixup_reduce_oscillation}, one can consider an example of 2D piece-wise linear mapping $f(x) = [f(x)_0, f(x)_1]$ on a segment ($x_r$, $x_s$) that oscillates between the response $[0,1]$ and $[1,0]$. If one measures the thickness on this particular segment, the measured thickness remains the same if the number of oscillations increases in the piece-wise linear mapping, but the robustness reduces with more oscillations. Thus, the example motivates the measurement on the direction of an adversarial attack, because an adversarial attack tends to find the closest ``peak'' or ``valley'' and can thus faithfully recover the correct value of boundary thickness unaffected by the oscillation.
\end{remark}

\subsection{Ablation study}\label{sec:ablation_non_adv}

In this section, we provide an extensive ablation study on the different choices of hyper-parameters used in the experiments. \textcolor{black}{We show that our main conclusion about the positive correlation between robustness and thickness remains the same for a wide range of hyper-parameters obviating the need to fine-tune these.} We study the adversarial attack direction used to measure thickness, the parameters $\alpha$ and $\beta$, as well as reproducing the results on two other datasets, namely CIFAR100 and SVHN, in addition to CIFAR10.

\subsubsection{Different choices of adversarial attack in measuring boundary thickness}\label{sec:ablation_adv_attack_parameters}

\begin{figure}
    \centering
    
    \begin{subfigure}{0.98\textwidth}
    \includegraphics[width=\linewidth]{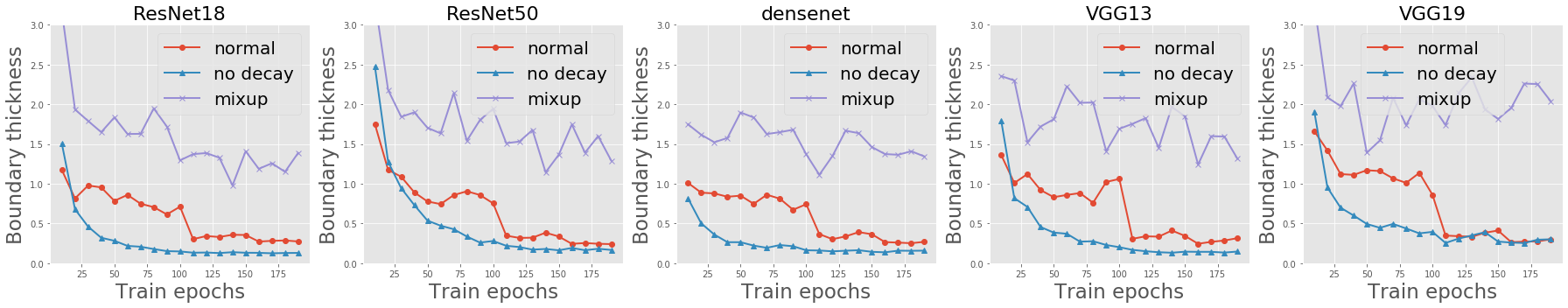}
    \subcaption{Results on CIFAR10 with a large attack $\epsilon$ = 2.0}
    \end{subfigure}
    
    \begin{subfigure}{0.98\textwidth}
    \includegraphics[width=\linewidth]{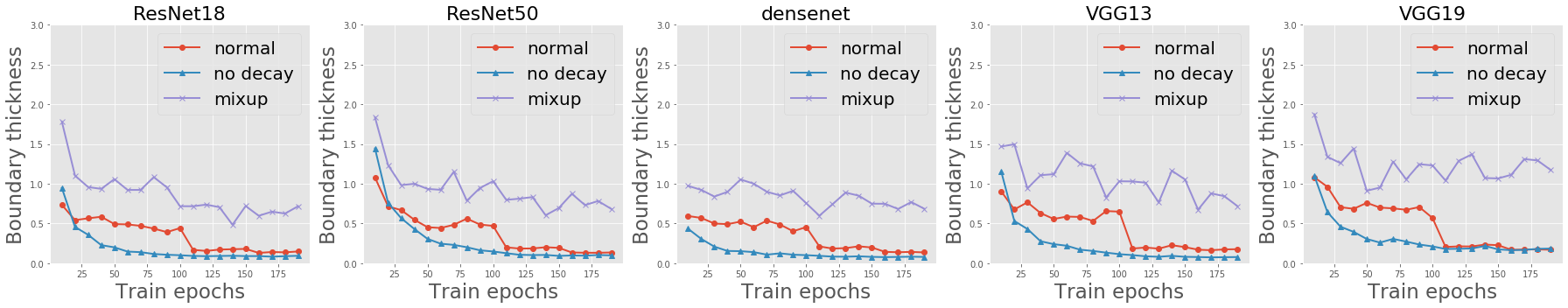}
    \subcaption{Results on CIFAR10 with a small attack $\epsilon$ = 0.6}
    \end{subfigure}
    
    \caption{{\bf Ablation study on different attack sizes.} Re-implementing the measurements in Figure \ref{fig:thickness_adv_direction} using a larger or a smaller adversarial attack.}
    \label{fig:different_attack}
\end{figure}

To measure boundary thickness on the adversarial direction, we have to specify a way to implement the adversarial attack. To generate Figure \ref{fig:thickness_adv_direction}, we used $\ell_2$ attack with attack range $\epsilon =$1.0, step size 0.2, and number of attack steps 20. We show that the results and more importantly our conclusions do not change by perturbing $\epsilon$ a little. See Figure \ref{fig:different_attack} and compare it with the corresponding results presented in Figure \ref{fig:thickness_adv_direction}. We see that the change in the size of the adversarial attack does not alter the trend. However, the measured thickness value does shrink if the $\epsilon$ becomes too small, which is expected.

\subsubsection{Different choices of $\alpha$ and $\beta$ in measuring boundary thickness}

\begin{figure}[ht!]
    \centering
    
    \begin{subfigure}{0.98\textwidth}
    \includegraphics[width=\linewidth]{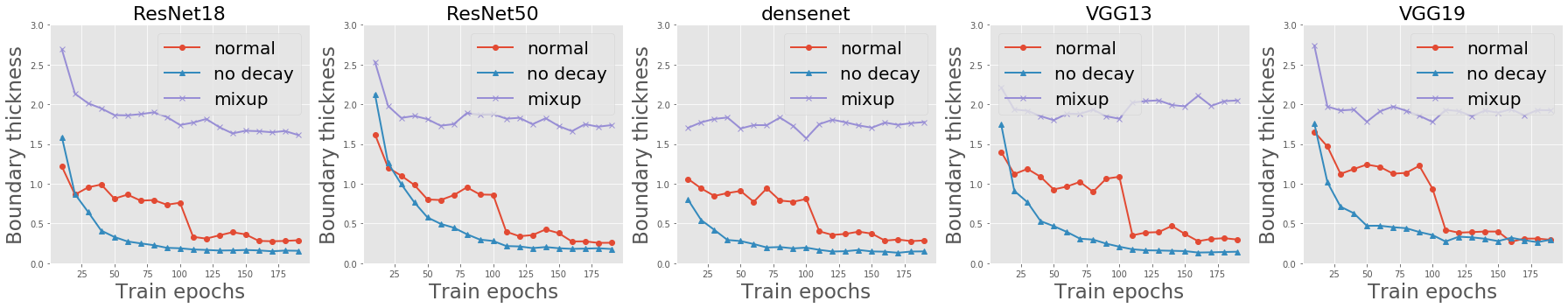}
    \subcaption{Results on CIFAR10 for $\alpha =0$ and $\beta =0.9$}
    \end{subfigure}
    
    \begin{subfigure}{0.98\textwidth}
    \includegraphics[width=\linewidth]{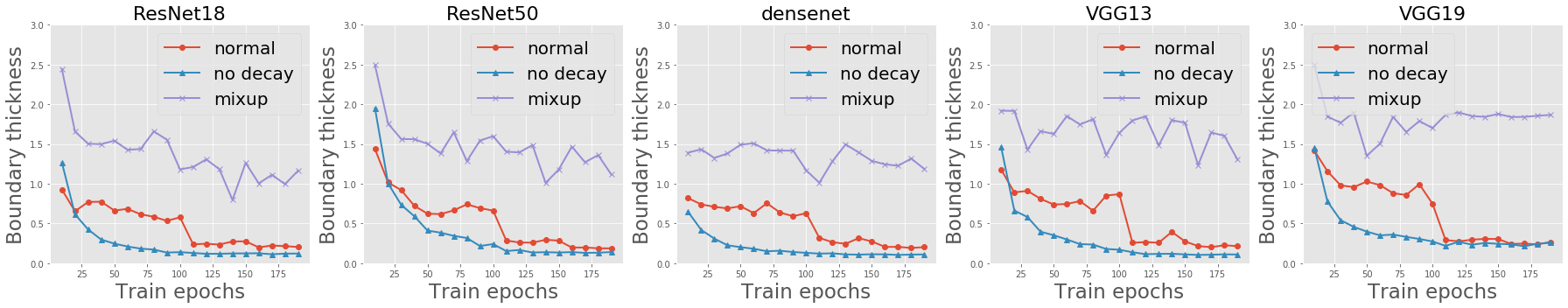}
    \subcaption{Results on CIFAR10 for $\alpha =0$ and $\beta =0.8$}
    \end{subfigure}
    
    \begin{subfigure}{0.98\textwidth}
    \includegraphics[width=\linewidth]{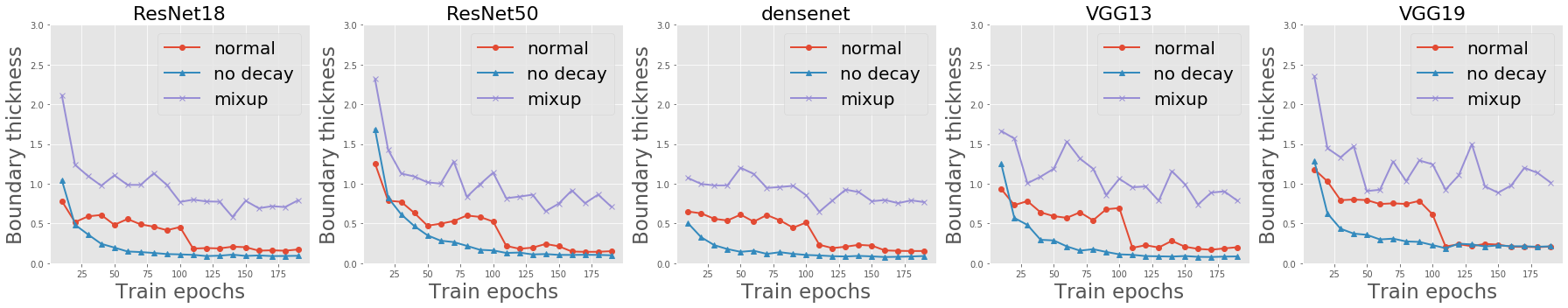}
    \subcaption{Results on CIFAR10 for $\alpha =0$ and $\beta =0.7$}
    \end{subfigure}
    
    \begin{subfigure}{0.98\textwidth}
    \includegraphics[width=\linewidth]{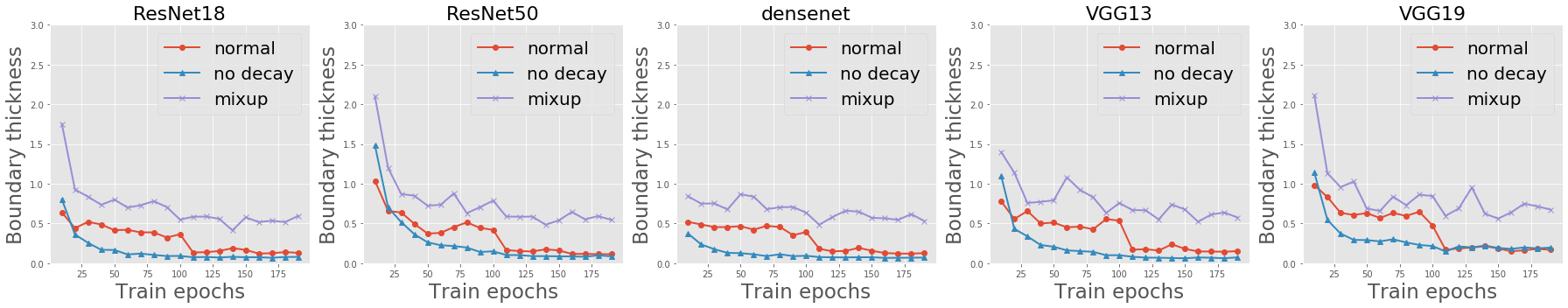}
    \subcaption{Results on CIFAR10 for $\alpha =0$ and $\beta =0.6$}
    \end{subfigure}
    
    \caption{{\bf Ablation study on different $\alpha$ and $\beta$.} Re-implementing the measurements in Figure \ref{fig:thickness_adv_direction} for different choices of $\alpha$ and $\beta$ in Eqn.\eqref{eqn:def_ij}.}
    \label{fig:different_alpha}
\end{figure}

In this subsection, we present an ablation study on the choice of hyper-parameters $\alpha$'s and $\beta$ in~\eqref{eqn:def_ij}. We show that the conclusions in Section \ref{sec:regularization} remain unchanged for a wide range of choices of $\alpha$ and $\beta$. See Figure \ref{fig:different_alpha}. From the results, we can see that the trend remains the same, i.e., mixup$>$normal training$>$training without weight decay. However, when $\alpha$ and $\beta$ become close to each other, the magnitude of boundary thickness also reduces, which is expected.  

\begin{remark}[Choosing the Best Hyper-parameters]
\normalfont
From Proposition \ref{prop:margin}, we know that the margin has particular values of the hyper-parameters $\alpha=0$ and $\beta=1$. Allowing different values of these hyper-parameters allows us the flexibility to better capture the robustness than margin. The best choices of these hyper-parameters might be different for different neural networks, and ideally one could do small validation based studies to tune these hyper-parameters, but our ablation study in this section shows that for a large regime of values, the exact search for the best choices is not required. We noticed, for example, setting $\alpha=0$ and $\beta=0.75$ works well in practice, and much better than the standard definition of margin that has been equated to robustness in past studies. 
\end{remark}

\begin{remark}[Choosing Asymmetric $\alpha$ and $\beta$]\label{rem:asymmetric_alpha}
\normalfont
We use asymmetric parameters $\alpha=0$ and $\beta>0$ mainly because, due to symmetry, the measured thickness when $(\alpha, \beta) = (0, x)$ is half in expectation of that when $(\alpha, \beta) = (-x, x)$. 
\end{remark}

We have discussed alternative ways of adversarial attacks to measure boundary thickness on sample pairs in Section \ref{sec:compare_different_thickness}. For completeness, we also do an ablation study for choice of hyper-parameters $\alpha$ and $\beta$ for this case. The results in Figure \ref{fig:different_alpha_adv}, and this study also reinforces the same conclusion -- that the particular choice of $\alpha$, $\beta$ matters less than the fact that they are not set to $0$ and $1$ respectively.

\begin{figure}[ht!]
    \centering
    
    \begin{subfigure}{0.98\textwidth}
    \includegraphics[width=\linewidth]{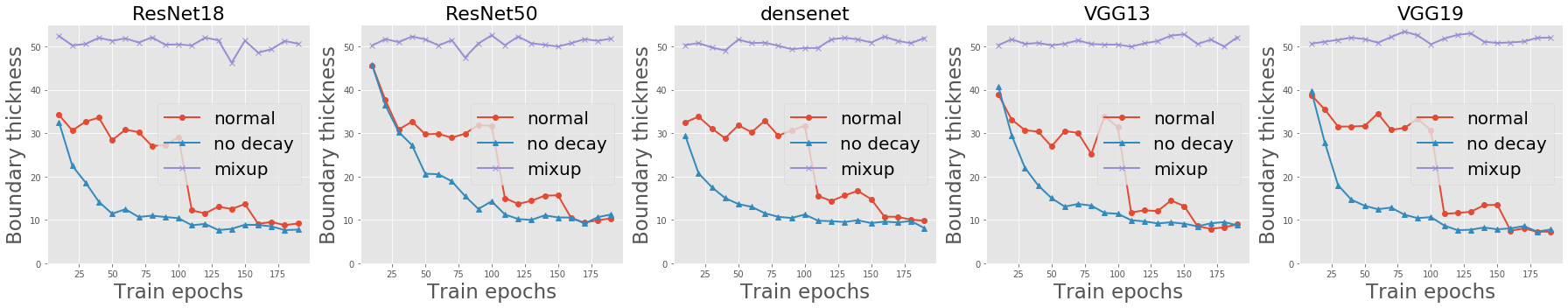}
    \subcaption{Results on CIFAR10 for $\alpha =0$ and $\beta =0.9$}
    \end{subfigure}
    
    \begin{subfigure}{0.98\textwidth}
    \includegraphics[width=\linewidth]{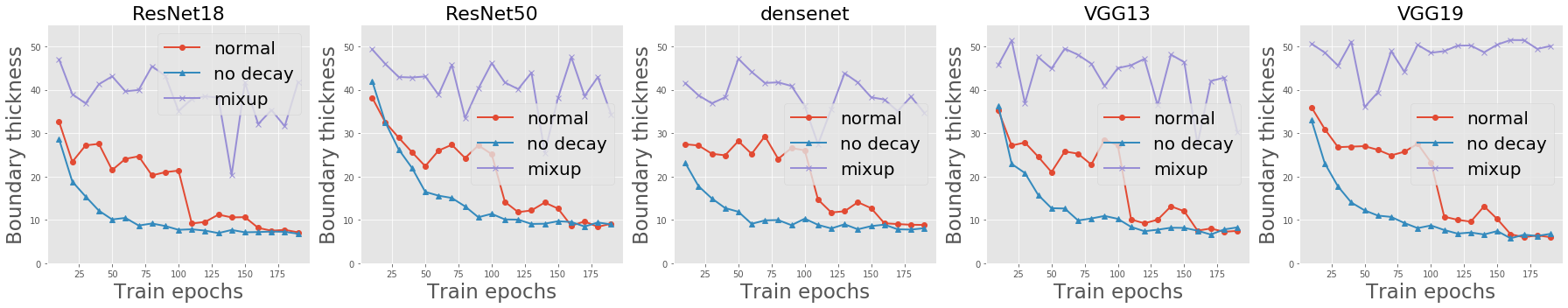}
    \subcaption{Results on CIFAR10 for $\alpha =0$ and $\beta =0.8$}
    \end{subfigure}
    
    \begin{subfigure}{0.98\textwidth}
    \includegraphics[width=\linewidth]{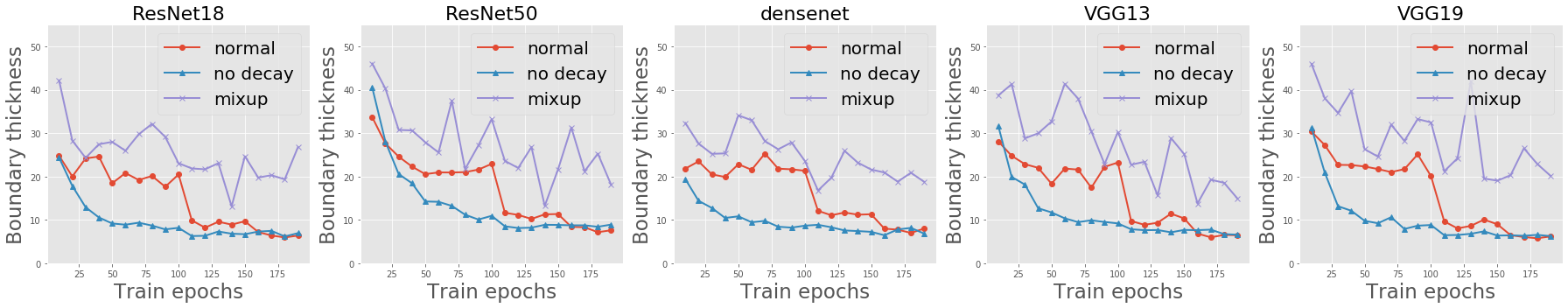}
    \subcaption{Results on CIFAR10 for $\alpha =0$ and $\beta =0.7$}
    \end{subfigure}
    
    \begin{subfigure}{0.98\textwidth}
    \includegraphics[width=\linewidth]{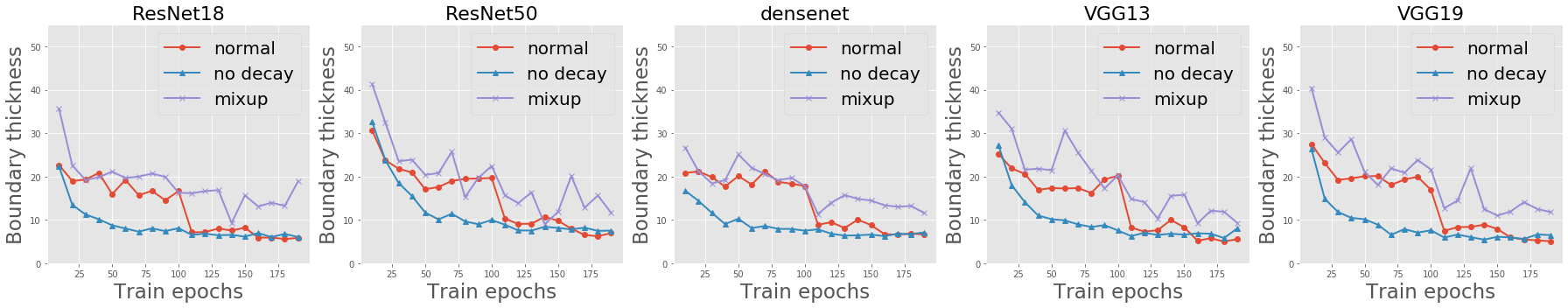}
    \subcaption{Results on CIFAR10 for $\alpha =0$ and $\beta =0.6$}
    \end{subfigure}
    
    \caption{{\bf Ablation study on different $\alpha$ and $\beta$ for thickness measured on random sample pairs.} Re-implementing the measurements in Figure \ref{fig:thickness_sample_pairs} for different choices of $\alpha$ and $\beta$ in Eqn.\eqref{eqn:def_ij}.}
    \label{fig:different_alpha_adv}
\end{figure}

\subsubsection{Additional datasets}

We repeat the experiments in Section \ref{sec:regularization} on two more datasets, namely CIFAR100 and SVHN. See Figure \ref{fig:different_data_sets}. In this figure, we used the same experimental setting as in Section \ref{sec:regularization}, except that we train with a different initial learning rate 0.01 on SVHN, following convention. We reach the same conclusion as in Section~\ref{sec:regularization}, i.e., that mixup increases boundary thickness, while training without weight decay reduces boundary thickness.

\begin{figure}[ht!]
    \centering
    
    \begin{subfigure}{0.98\textwidth}
    \includegraphics[width=\linewidth]{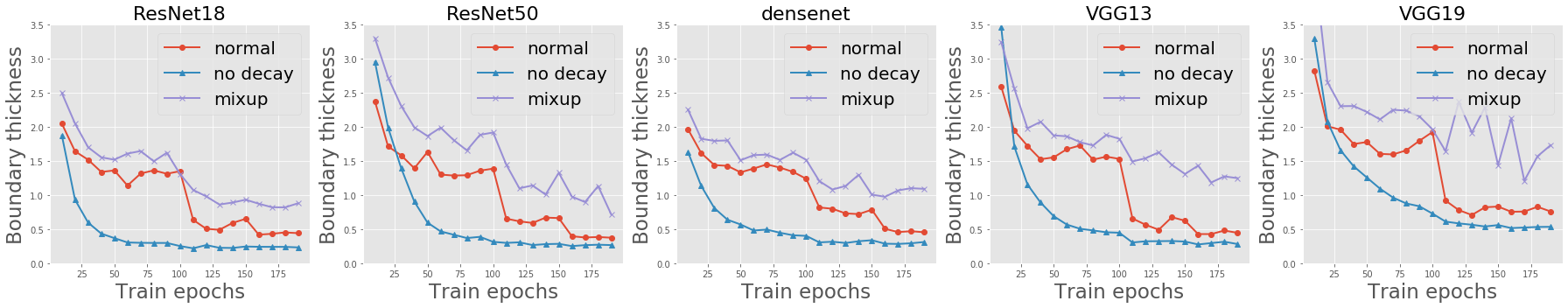}
    \subcaption{Results on CIFAR100}
    \end{subfigure}
    
    \begin{subfigure}{0.98\textwidth}
    \includegraphics[width=\linewidth]{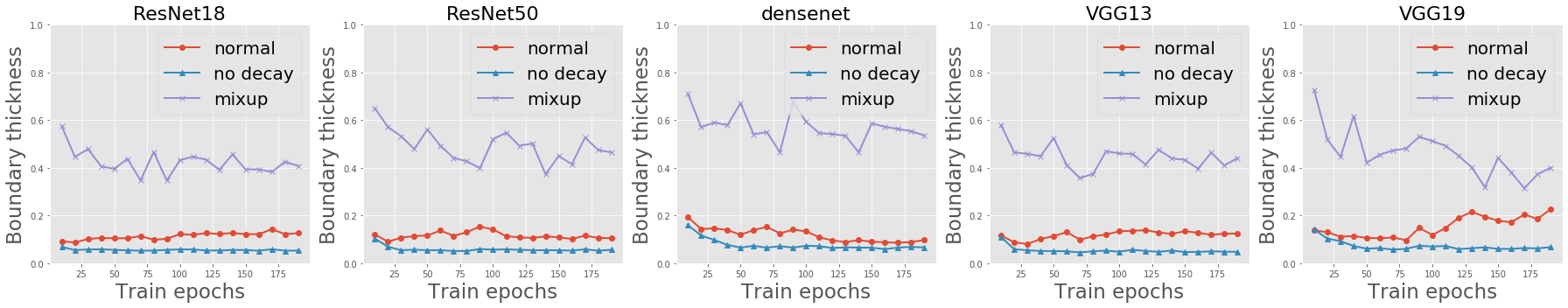}
    \subcaption{Results on SVHN}
    \end{subfigure}
    
    \caption{{\bf Ablation study on more datasets.} Re-implementing the measurements in Figure \ref{fig:thickness_adv_direction} on for two other datasets CIFAR100 and SVHN.}
    \label{fig:different_data_sets}
\end{figure}

\subsection{Visualizing neural network boundaries}\label{sec:visualization_mixup_boundary}

In this section, we show a qualitative comparison between a neural network trained using mixup and another one trained in a standard way without mixup. See Figure \ref{fig:decision_boundary_mixup}. In the left figure, we can see that different level sets are spaced apart, while the level sets in the right figure are hardly distinguishable. Thus, the mixup model has a larger boundary thickness than the naturally trained model for this setting.

For the visualization shown in Figure~\ref{fig:decision_boundary_mixup}, we use 17 different colors to represent the 17 level sets. The origin represents a randomly picked CIFAR10 image. The $x$-axis represents a direction of adversarial perturbation found using the projected gradient descent method \cite{madry2017towards}. The $y$-axis and the $z$-axis represent two random directions that are orthogonal to the $x$ perturbation direction. Each CIFAR10 input image has been normalized using standard routines during training, e.g., using the standard deviations $(0.2023, 0.1994, 0.2010)$, respectively, for the RGB channels, so the scale of the figure may not represent the true scale in the original space of CIFAR10 input images.

\begin{figure}[ht!]
\qquad\qquad\qquad\quad Mixup\qquad\qquad\quad\qquad Standard training\\
    \centering
    \includegraphics[width=0.28\textwidth]{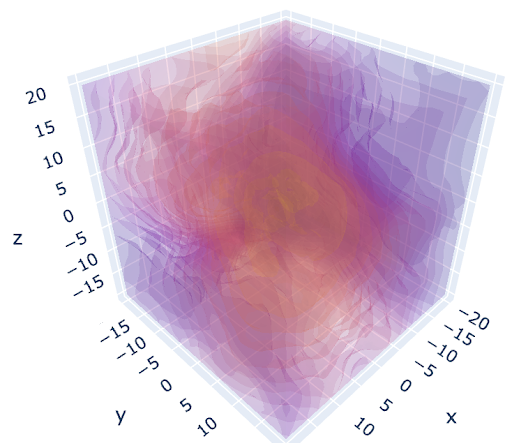}
    \includegraphics[width=0.30\textwidth]{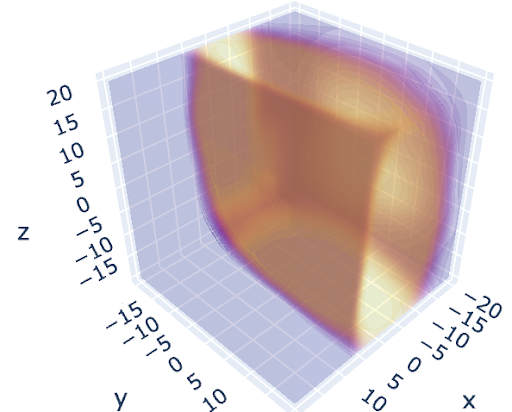}\hspace{2mm}
    \includegraphics[width=0.034\textwidth]{figs/color_bar.png}
    \caption{{\bf Visualization of mixup.} A mixup model has a thicker boundary than the same model trained using standard setting, because the level sets represented by different colors are more separate in mixup than the standard case. }
    \label{fig:decision_boundary_mixup}
\end{figure}
\section{Additional Experiments on Adversarial Training}\label{sec:ablation}

In this section, we provide additional details and analyses for the experiments in Section \ref{sec:adv_training_thickness} on adversarially trained neural networks. We demonstrate that a thick boundary improves adversarial robustness for wide range of hyper-parameters, including those used during adversarial training and those used to measure boundary thickness.

\subsection{Details of experiments in Section \ref{sec:adv_training_thickness}}\label{sec:details_adv_experiment}

We use ResNet-18 on CIFAR-10 for all the experiments in Section \ref{sec:adv_training_thickness}. We first choose a standard setting that trains with learning rate 0.1, no learning rate decay, weight decay 5e-4, attack range $\epsilon = 8$ pixels, 10 iterations for each attack, and 2 pixels for the step-size. Then, for each set of experiments, we change one parameter based on the standard setting. We tune the parameters to achieve a natural training accuracy larger than 90\%. For the experiment on early stopping, we use a learning rate 0.01 instead of 0.1 to achieve 90\% training accuracy. We train the neural network for 400 epochs without learning rate decay to filter out the effect of early stopping. The results with learning rate decay and early stopping are reported in Section \ref{sec:adv_decay} which show the same trend. 

When measuring boundary thickness, we select segments on the adversarial direction, and we find the adversarial direction by using an $\ell_2$ PGD attack of size $\epsilon=$ 2.0, step size 0.2, and number of attack steps 20. 

\begin{table}[ht!]
\small
\centering
\begin{tabular}{p{2.5cm}p{1.8cm}p{1.8cm}p{1.8cm}p{1.2cm}p{2.0cm}}
\hline
\hline
\ga Changed parameter & Learning rate & Weight decay & L1 & Cutout & Early stopping  \\
\hline
\gb Learning rate &  3e-3 &  5e-4   &   0   &   0   &   None  \\
\ga               &  1e-2 &  5e-4   &   0   &   0   &   None  \\
\gb               &  3e-2 &  5e-4   &   0   &   0   &   None  \\
\hline
\ga Weight decay  &  1e-1 &  0e-4   &   0   &   0   &   None  \\
\gb               &  1e-1 &  1e-4   &   0   &   0   &   None  \\
\hline
\ga L1            &  1e-1 &  0      &  5e-7 &   0   &   None  \\
\gb               &  1e-1 &  0      &  2e-6 &   0   &   None  \\
\ga               &  1e-1 &  0      &  5e-6 &   0   &   None  \\
\hline
\gb Cutout        &  1e-1 &  0      &   0   &   4   &   None  \\
\ga               &  1e-1 &  0      &   0   &   8   &   None  \\
\gb               &  1e-1 &  0      &   0   &  12   &   None  \\
\ga               &  1e-1 &  0      &   0   &  16   &   None  \\
\hline
\gb Early stopping&  1e-2 &  5e-4   &   0   &   0   &   50  \\
\ga               &  1e-2 &  5e-4   &   0   &   0   &   100  \\
\gb               &  1e-2 &  5e-4   &   0   &   0   &   200  \\
\ga               &  1e-2 &  5e-4   &   0   &   0   &   400  \\
\hline
\end{tabular}
\caption{{\bf Hyper-parameters in Section \ref{sec:adv_training_thickness}.} The table reports the hyper-parameters used to obtain the results in Figure \ref{fig:adv_thickness_all_1} and \ref{fig:adv_thickness_all_2} for adversarial training.\label{tab:hyper_parameters}}
\end{table}

Note that boundary thickness indeed increases with heavier regularization or data augmentation. See Figure \ref{fig:thickness_increases_regularization} on the thickness of the models trained with the parameters reported in Table \ref{tab:hyper_parameters}. 
\begin{figure}[ht!]
    \centering
    \includegraphics[width=0.19\textwidth]{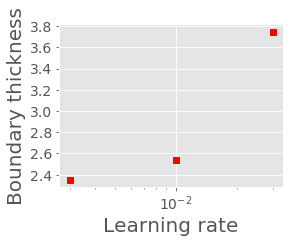}
    \includegraphics[width=0.19\textwidth]{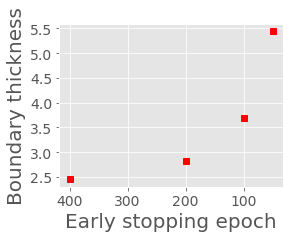}
    \includegraphics[width=0.19\textwidth]{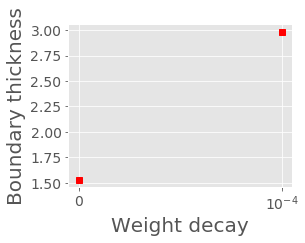}
    \includegraphics[width=0.19\textwidth]{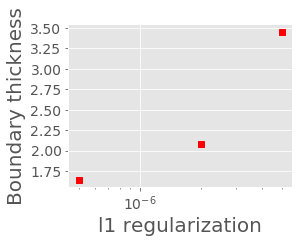}
    \includegraphics[width=0.19\textwidth]{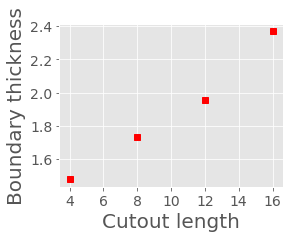}
    \caption{{\bf Regularization and data augmentation increase thickness.} The five commonly used regularization and data augmentation schemes studied in Section \ref{sec:adv_training_thickness} all increase boundary thickness.}
    \label{fig:thickness_increases_regularization}
\end{figure}

\subsection{Adversarial training with learning rate decay}\label{sec:adv_decay}

Here, we reimplement the experiments shown in Figure \ref{fig:adv_thickness_all_1} and \ref{fig:adv_thickness_all_2} but with learning rate decay and early stopping, which is reported by \cite{rice2020overfitting} to improve the robust accuracy of adversarial training. We still use ResNet-18 on CIFAR-10. However, instead of training for 400 epochs, we train for only 120 epochs, with a learning rate decay of 0.1 at epoch 100. The adversarial training still uses 8-pixel PGD attack with 10 steps and step size 2 pixel.

The set of training hyper-parameters that we use are shown in Table \ref{tab:parameter_adv_decay}. Similar to Figure \ref{fig:adv_thickness_all}, we tune hyper-parameters such that the training accuracy on natural data reaches 90\%. The results are reported in Figure \ref{fig:adv_thickness_all_decay}. We do not separately test early stopping because all experiments follow the same early stopping~procedure.

\begin{figure}
    \centering
     \begin{subfigure}{0.45\textwidth}
        \includegraphics[width=1.0\linewidth]{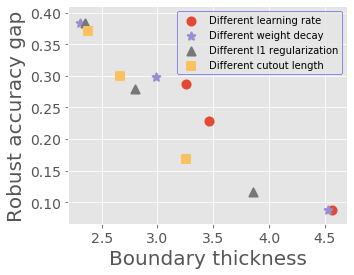}
        \subcaption{\label{fig:adv_thickness_all_decay_1}}
    \end{subfigure}
    \begin{subfigure}{0.45\textwidth}
        \includegraphics[width=1.0\linewidth]{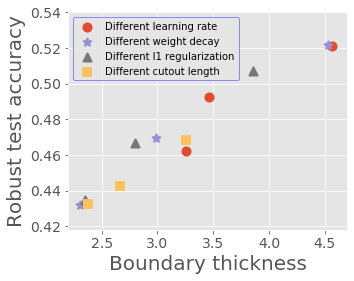}
        \subcaption{\label{fig:adv_thickness_all_decay_2}}
    \end{subfigure}
    \caption{{\bf Adversarial training with learning rate decay.} Reimplementing the experimental protocols that obtained Figure \ref{fig:adv_thickness_all} using learning rate decay. The hyper-parameters are shown in Table~\ref{tab:parameter_adv_decay}}
    \label{fig:adv_thickness_all_decay}
\end{figure}

\begin{table}
\small
\centering
\begin{tabular}{p{2.8cm}p{1.8cm}p{1.8cm}p{1.8cm}p{1.2cm}p{2.0cm}}
\hline
\hline
\ga Changed parameter & Learning rate & Weight decay & L1 & Cutout & Early stopping  \\
\hline
\gb Learning rate &  1e-2 &  5e-4   &   0   &   0   &   120  \\
\ga               &  3e-2 &  5e-4   &   0   &   0   &   120  \\
\gb               &  1e-1 &  5e-4   &   0   &   0   &   120  \\
\hline
\ga Weight decay  &  1e-1 &  0e-4   &   0   &   0   &   120  \\
\gb               &  1e-1 &  1e-4   &   0   &   0   &   120  \\
\ga               &  1e-1 &  5e-4   &   0   &   0   &   120  \\
\hline
\gb L1            &  1e-1 &  0      &  5e-7 &   0   &   120  \\
\ga               &  1e-1 &  0      &  2e-6 &   0   &   120  \\
\gb               &  1e-1 &  0      &  5e-6 &   0   &   120  \\
\hline
\ga Cutout        &  1e-1 &  0      &   0   &   4   &   120  \\
\gb               &  1e-1 &  0      &   0   &   8   &   120  \\
\ga               &  1e-1 &  0      &   0   &  12   &   120  \\
\hline
\hline
\end{tabular}
\caption{Hyper-parameter settings in Figure \ref{fig:adv_thickness_all_decay}.\label{tab:parameter_adv_decay}}
\end{table}

\subsection{Different choices of the hyper-parameters in measuring adversarially trained networks}\label{sec:ablation_adv_networks}

Here, we study the connection between boundary thickness and robustness under different choices of hyperparameters used to measure thickness. Specifically, we use three different sets of hyper-parameters to reimplement the experiments that obtained Figure \ref{fig:adv_thickness_all_1} and Figure \ref{fig:adv_thickness_all_2}. These parameters are provided in Table \ref{tab:parameter_adv_ablation}. The first row represents the base parameters used in Figure \ref{fig:adv_thickness_all_1} and Figure \ref{fig:adv_thickness_all_2}. Then, the second row changes $\beta$. The third and the fourth row change the attack size $\epsilon$ and step size. The changes in the hyper-parameters maintains our conclusion regarding the relationship between thickness and robustness. See Figure \ref{fig:adv_exp_ablation}.

\begin{figure}
    \centering
    \begin{subfigure}{0.45\textwidth}
        \includegraphics[width=\linewidth]{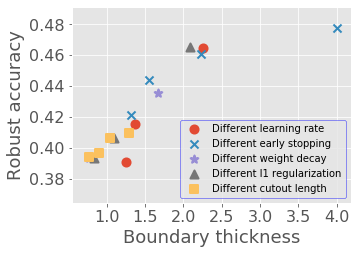}
        \subcaption{\label{fig:adv_exp_gap_1}}
    \end{subfigure}
     \begin{subfigure}{0.45\textwidth}
        \includegraphics[width=\linewidth]{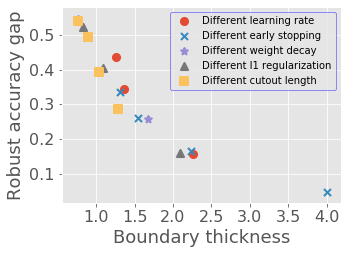}
        \subcaption{\label{fig:adv_exp_acc_1}}
    \end{subfigure}
    \begin{subfigure}{0.45\textwidth}
        \includegraphics[width=\linewidth]{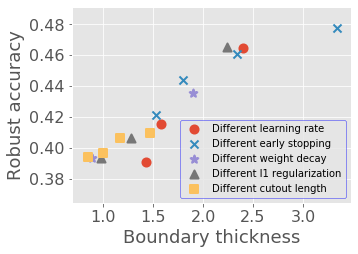}
        \subcaption{\label{fig:adv_exp_gap_2}}
    \end{subfigure}
     \begin{subfigure}{0.45\textwidth}
        \includegraphics[width=\linewidth]{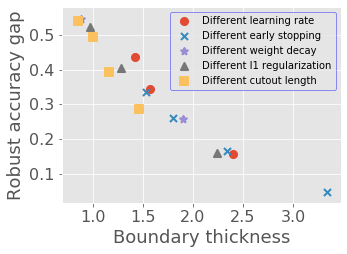}
        \subcaption{\label{fig:adv_exp_acc_2}}
    \end{subfigure}
    \begin{subfigure}{0.45\textwidth}
        \includegraphics[width=\linewidth]{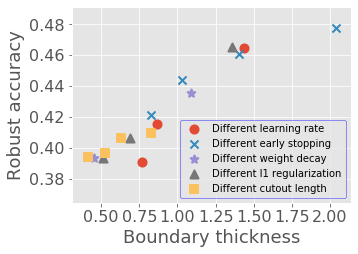}
        \subcaption{\label{fig:adv_exp_gap_3}}
    \end{subfigure}
     \begin{subfigure}{0.45\textwidth}
        \includegraphics[width=\linewidth]{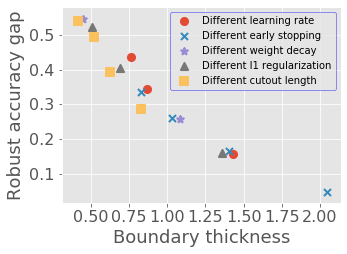}
        \subcaption{\label{fig:adv_exp_acc_3}}
    \end{subfigure}
    \caption{{\bf Ablation study on adversarially trained networks.} Re-implementing the measurements in Figure \ref{fig:adv_thickness_all} for three other different choices of $\alpha$, $\beta$ and the attack size $\epsilon$. The parameters are provided in Table \ref{tab:parameter_adv_ablation}. For all settings, robustness still increases with thickness.}
    \label{fig:adv_exp_ablation}
\end{figure}

\begin{table}
\centering
\begin{tabular}{p{3.0cm}p{1.2cm}p{1.2cm}p{1.2cm}p{2.5cm}p{1.2cm}}
\hline
\hline
\ga Figures & $\alpha$ & $\beta$ & Attack $\epsilon$ & Number of steps & Step size  \\
\hline
\gb Fig. \ref{fig:adv_thickness_all_1} and Fig. \ref{fig:adv_thickness_all_2} \qquad (the base setting in the main paper) &  0 &  0.75   &   2.0   &   20   &   0.2  \\
\ga Fig. \ref{fig:adv_exp_gap_1} and \ref{fig:adv_exp_acc_1}             &  0 &  0.5    &   2.0   &   20   &   0.2  \\
\gb Fig. \ref{fig:adv_exp_gap_2} and \ref{fig:adv_exp_acc_2}             &  0 &  0.75   &   1.0   &   20   &   0.1  \\
\ga Fig. \ref{fig:adv_exp_gap_3} and \ref{fig:adv_exp_acc_3}             &  0 &  0.75   &   0.6   &   20   &   0.06  \\
\hline
\hline
\end{tabular}
\caption{Hyper-parameter settings in Figure \ref{fig:adv_exp_ablation}.\label{tab:parameter_adv_ablation}}
\end{table}

\subsection{Additional experiment on comparing boundary thickness and margin}\label{sec:margin_vs_thickness_more}

Here, we further analyze how boundary thickness compares to the margin. 
In particular, we study what happens if we change the two hyper-parameters $\alpha$ and $\beta$ to the limit when boundary thickness becomes similar to margin, especially average margin shown in Figure \ref{fig:adv_thickness_all_3}. 
The purpose of this additional experiment is to study the distinction between boundary thickness and average margin, even when they become similar to each other.

We notice that, when reporting boundary thickness, apart from the ablation study, we always use parameters $\alpha=0$ and $\beta=0.75$. 
When measuring average margin shown in Figure \ref{fig:adv_thickness_all_3}, as we mentioned in Section \ref{sec:thickness_beats_margin}, we approximately measure margin on the direction of adversarial examples, which effectively measures the boundary thickness when setting $\alpha=0$ and $\beta=1$. 
Thus, a reasonable questions is why the results of average margin in Figure \ref{fig:adv_thickness_all_3} are fundamentally different from boundary thickness, despite the small change only in $\beta$.

To answer this question, we empirically show the transition from $\beta = 0.9$ to $1.0$. See Figure \ref{fig:change_beta}. We see that a sudden phase-transition happens as $\beta $ gets close to $1.0$. This phenomenon has two implications. First, although boundary thickness reduces to margin with specific choices of parameters (in particular, $\alpha=0$ and $\beta=1$), it is different from margin when $\beta$ is not close to $1$. Second, for a large range of $\beta$, boundary thickness can distinguish robustness better than margin.
    
\begin{figure}
    \centering
    \includegraphics[width=.195\textwidth]{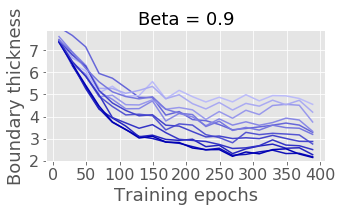}
    \includegraphics[width=.195\textwidth]{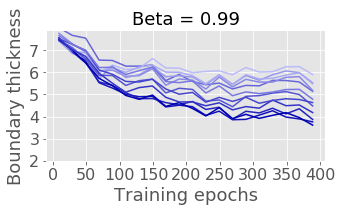}
    \includegraphics[width=.195\textwidth]{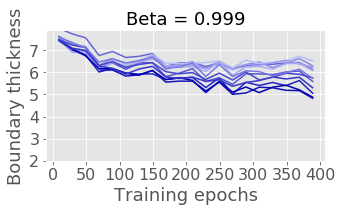}
    \includegraphics[width=.195\textwidth]{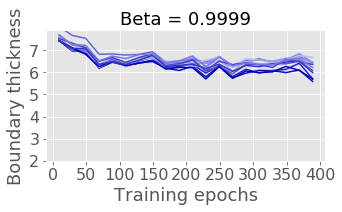}
    \includegraphics[width=.195\textwidth]{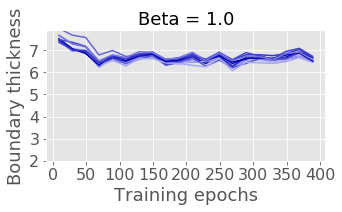}
    \caption{{\bf Boundary thickness versus margin.} Reimplementing procedures in Fig. \ref{fig:adv_thickness_all_3} with different $\beta$ when $\alpha=0$. There is a large range of $\beta$'s in which boundary thickness can measure robustness. 
    }
    \label{fig:change_beta}
\end{figure}
\section{Addition Experiments and Details of Noisy Mixup}\label{sec:details_experiments_noisy_mixup}

In this section, we first provide more details for the experiments on noisy mixup. Then, we provide some additional experiments to further compare noisy mixup and ordinary mixup.

\subsection{More details of noisy-mixup}\label{sec:details_noisy_mixup}

In the experiments, we use ResNet-18 on both CIFAR-10 and CIFAR-100. For OOD, we use the datasets from \cite{hendrycks2019using} and evaluate on 15 different types of corruptions in CIFAR10-C and CIFAR100-C, including noise, blur, weather, and digital corruption.

The probability to replace a clean image with a noise image is 0.5. 
The model is trained for 200 epochs, and learning rate decay with a factor of 10 at epochs 100 and 150, respectively. For both mixup and noisy mixup, we train using three learning rates $0.1, 0.03, 0.01$ and report the best results. The weight decay is set to be 1e-4, which follows the recommendation in \cite{zhang2017mixup}. 
For noisy mixup, the value on each pixel in a noise image is sampled independently from a uniform distribution in [0,1], and processed using the same standard training image transforms applied on the ordinary image~inputs.

When testing the robustness of the ordinary mixup and noisy mixup models, we used both black/white-box attacks and OOD samples. For white-box attack, we use an $\ell_\infty$ PGD attack with 20 steps. The attack size $\epsilon$ can take values in 8-pixel (0.031), 6-pixel (0.024) and 4-pixel (0.0157), respectively. The step size is 1/10 of the attack size $\epsilon$. For black-box attacks, we use ResNet-110 to generate the transfer attack. The other parameters are the same with the 8-pixel white-box attack.

When we measure the boundary thickness values of models trained using mixup and noisy mixup, we estimate each thickness value by repeated runs on 320 random samples. Then, we repeat this procedure 10 times and report both the average and three times the standard deviation in Figure \ref{fig:noisy_mixup}.

\subsection{Addition experiments on noisy-mixup}\label{sec:more_experiments_noisy_mixup}

Here, we provide two additional experiments on noisy mixup. 
In the first experiment, we want to study the effect of introducing an additional ``NONE'' class in noisy mixup.
In the second experiment, we justify the clean accuracy drop of noisy mixup as a consequence of insufficient model size.

In the first experiment, we focus on the additional ``NONE'' class introduced in noisy mixup. 
We notice that we attribute the improved robustness to the mixing of the ``NONE'' class with the original images in the dataset. 
However, due to this additional class, it is possible that the network can have improved robustness by learning to distinguish clean images from noise.
Thus, to separate the pure effect of having the additional ``NONE'' class in noisy mixup, we compare mixup and noisy mixup when mixup also has this additional class. 
Specifically, we measure ordinary mixup on CIFAR10 with the 11th class but only mix sample pairs within the first ten classes or within the 11th class. 
We call this method ``mixup-SEP''.
See the results in Table \ref{tab:sep_mixup}.
Note that the results of mixup and noisy mixup are the same with Table \ref{tab:noisy_mixup}.
From the results, we see that noisy mixup is more robust than mixup-SEP. Thus, we cannot attribute the improved robustness solely to the ``NONE'' class.

In the second experiment, we further study the clean accuracy drop of noisy mixup shown in Table~\ref{tab:noisy_mixup}. As we have mentioned, one reason for the clean accuracy drop is the tradeoff between clean accuracy and robust accuracy often seen in robust training algorithms. The supporting evidence is that adversarial training in the same setting using ResNet-18 on CIFAR100 only achieve 57.6\% clean accuracy.
Here, we analyze another potential factor that leads to the drop of clean accuracy, which is the size of the network. 
To study this factor, we change ResNet-18 to ResNet-50 and repeat the experiment. 
We report the results in Table~\ref{tab:ResNet50_mixup}.
Note that the results for ResNet-18 are the same with Table \ref{tab:noisy_mixup}.
Thus, we can see that the drop of clean accuracy reduces when using the larger ResNet-50 compared to ResNet-18.

\begin{table}
\small
\centering
\begin{tabular}{p{1.0cm}p{1.8cm}p{1.5cm}p{1.2cm}p{1.5cm}p{1.2cm}p{1.2cm}p{1.2cm}}
\hline
\hline
\ga Dataset & \multicolumn{1}{c}{Method} & \multicolumn{1}{c}{Clean} & \multicolumn{1}{c}{OOD} & Black-box  & \multicolumn{3}{c}{PGD-20} \\
\hline 
\gb &&&&& 8-pixel & 6-pixel & 4-pixel\\
\hline
\ga CIFAR10 &  \multicolumn{1}{c}{Mixup} & {\bf96.0}$\pm$0.1 &  78.5$\pm$0.4   &  46.3$\pm$1.4 & 2.0$\pm$0.1        &  3.2$\pm$0.1  &  6.3$\pm$0.1 \\
\gb  &  \multicolumn{1}{c}{Mixup-SEP} & 95.3$\pm$0.2 &  82.1$\pm$0.9   &  55.5$\pm$8.6 & 4.4$\pm$1.6        &  6.5$\pm$1.9  &  11.3$\pm$2.4 \\
\ga  & \multicolumn{1}{c}{Noisy mixup} & 94.4$\pm$0.2 &  {\bf 83.6}$\pm$0.3 &  {\bf 78.0}$\pm$1.0 & {\bf 11.7}$\pm$3.3  & {\bf 16.2}$\pm$4.2  & {\bf 25.7}$\pm$5.0 \\
 \hline
 \hline
\end{tabular}
\caption{{\bf Mixup-SEP.} Having an additional ``NONE'' class in mixup-SEP can help improve robustness of mixup, but it still cannot achieve the robustness of noisy mixup.
Results are reported for the best learning rate in [0.1, 0.03, 0.01].} \label{tab:sep_mixup}
\end{table}

\begin{table}
\small
\centering
\begin{tabular}{p{1.5cm}p{1.8cm}p{1.4cm}p{1.1cm}p{1.4cm}p{1.1cm}p{1.1cm}p{1.1cm}}
\hline
\hline
\ga Dataset, & \multicolumn{1}{c}{Method} & \multicolumn{1}{c}{Clean} & \multicolumn{1}{c}{OOD} & Black-box  & \multicolumn{3}{c}{PGD-20} \\
\gb Model &&&&& 8-pixel & 6-pixel & 4-pixel\\
\hline
\ga CIFAR100, &  \multicolumn{1}{c}{Mixup} & {\bf78.3}$\pm$0.8 &  51.3$\pm$0.4  & 37.3$\pm$1.1 &  0.0$\pm$0.0       &  0.0$\pm$0.0 &  0.1$\pm$0.0\\
\gb ResNet-18 & \multicolumn{1}{c}{Noisy mixup} & 72.2$\pm$0.3 &  {\bf 52.5}$\pm$0.7 &  {\bf 60.1}$\pm$0.3 & {\bf 1.5}$\pm$0.2  & {\bf 2.6$\pm$0.1} & {\bf 6.7}$\pm$0.9 \\
\hline 
\ga CIFAR100, &  \multicolumn{1}{c}{Mixup} & {\bf79.3}$\pm$0.6 &  53.4$\pm$0.2  & 39.7$\pm$1.3 &  1.0$\pm$0.1       &  1.6$\pm$0.2 &  3.1$\pm$0.4\\
\gb ResNet-50 & \multicolumn{1}{c}{Noisy mixup} & 75.5$\pm$0.5 &  {\bf 55.5}$\pm$0.3 &  {\bf 59.7}$\pm$1.1 & {\bf 4.3}$\pm$0.3  & {\bf 6.4$\pm$0.1} & {\bf 10.3}$\pm$0.0 \\
 \hline
 \hline
\end{tabular}
\caption{{\bf Clean accuracy drop of noisy mixup on CIFAR100.} The drop of clean accuracy can be mitigated by using a larger ResNet-50 network.
Results are reported for the best learning rate in [0.1, 0.03, 0.01]. \label{tab:ResNet50_mixup}}
\end{table}

\section{More Details of the Experiment on Non-robust Features in Section \ref{sec:connect_other_works}}\label{sec:mixup_compare}

In this part, we provide more details of the experiment on non-robust features. First, we discuss the background on the discovery in \cite{ilyas2019adversarial}. Using a dataset $\mathcal{D}_\text{train} = \{(x_i,y_i)\}_{i=1}^n$, $\mathcal{D}_\text{test}$ and a $C$-class neural network classifier $f$, a new dataset $\mathcal{D}'$ is generated as follows:

\begin{itemize}
    \item ({\bf attack-and-relabel}) Generate an adversarial example $x_i'$ from each training sample $x_i$ such that the prediction of the neural network $y_i' = \arg\max_{j\in\{1,2,\ldots,C\}} f(x_i')_j$ is not equal to $y_i$. Then, label the new sample $x_i'$ as $y_i'$. The target class $y_i'$ can either be a fixed value for each class $y_i$, or a random class that is different from $y_i$. In this paper, we use random target classes.
    \item ({\bf test-on-clean-data}) Train a new classifier $f'$ on the new dataset $\mathcal{D}' = \{(x_i',y_i')\}_{i=1}^n$, evaluate on the original clean testset $\mathcal{D}_\text{test}$, and obtain a test accuracy $\text{ACC}$.
\end{itemize}

The observation in \cite{ilyas2019adversarial} is that by training on the completely mislabeled dataset $\mathcal{D}' = \{(x_i',y_i')\}_{i=1}^n$, the new classifier $f'$ still achieves a high $\text{ACC}$ on $\mathcal{D}_\text{test}$. The explanation in \cite{ilyas2019adversarial} is that each adversarial example $x'$ contains ``non-robust features'' of the target label $y_i'$, which are useful for generalization, and $\text{ACC}$ measures the reliance on these non-robust features. The test accuracy $\text{ACC}$ obtained in this way is the \emph{generalization accuracy} reported in Figure \ref{fig:regularize}.

In Figure \ref{fig:regularize}, the $x$-axis means different epochs in training a source model. Each error bar represents the variance of non-robust feature scores measured in 8 repeated runs. Thus, each point in this figure represents 8 runs of the same procedures of a non-robust feature experiment for a different source network, and each curve in Figure \ref{fig:regularize} contains multiple experiments using different source networks, instead of a single training-testing round. It is interesting to see that source networks trained for different number of epochs can achieve different non-robust feature scores, which suggests that when the decision boundary changes between epochs, the properties of the non-robust features also change.

In the experiments to generate Figure \ref{fig:regularize}, we use a ResNet56 model as the source network, and a ResNet20 model as the target network. These two ResNet models are standard for classification tasks on CIFAR10. The source network is trained for 500 epochs, with an initial learning rate of 0.1, weight decay 5e-4, and learning rate decay 0.1 respectively at epoch 150, 300, and 450. When training with a small learning rate, the initial learning rate is set as 0.003. When training with mixup, the weight decay is 1e-4, following the initial setup in \cite{zhang2017mixup}. The adversarial attack uses PGD with 100 iterations, an $\ell_2$ attack range of $\epsilon$ = 2.0, and an attack stepsize of 0.4. 

\begin{remark}[Why Thick Boundaries Reduce Non-robust Features]
\normalfont
Our explanation on why a thick boundary reduces non-robust feature score is that a thicker boundary is potentially more ``complex''\footnote{Note that, although various complexity measures are associated with generalization in classical theory, and the inductive bias towards simplicity may explain generalization of neural networks \cite{de2019random,valle2018deep}, it has been pointed out that simplicity may be at odds with robustness\cite{nakkiran2019adversarial}.}. Then, in the attack-and-relabel step, the adversarial perturbations are generated in a relatively ``random'' way, independent of the true data distribution, making the ``non-robust features'' preserved by adversarial examples disappear. 
Studying the inner mechanism of the generation of non-robust features and the connection to boundary thickness is a meaningful future work.
\end{remark}

\end{document}